  \providecommand\BibTeX{{%
    \normalfont B\kern-0.5em{\scshape i\kern-0.25em b}\kern-0.8em\TeX}}}
\newcommand{\sys}{\textsc{Persia}\xspace}
\newcommand{\bagua}{\textsc{Bagua}\xspace}
\newcommand{\kuaishou}{Kwai\xspace}
\newcommand{\assign}{:=}
\newcommand{\mathD}{\mathrm{D}}
\newcommand{\embWeight}{\mathbf{w}^{\mathrm{emb}}}
\newcommand{\embDimension}{N^{\mathrm{emb}}}
\newcommand{\nnDimension}{N^{\mathrm{nn}}}
\newcommand{\embGradient}{{F_{\xi}^{\mathrm{emb}}}'}
\newcommand{\embWeightIter}[1]{\mathbf{w}^{\mathrm{emb}}_{\mathrm{#1}}}
\newcommand{\embGradientIter}[1]{{{F^{\mathrm{emb}}_{#1}}}'}
\newcommand{\idInput}{\mathbf{x}^{\mathrm{ID}}}
\newcommand{\idInputIndexed}[1]{{\mathbf{x}_{#1}^{\mathrm{ID}}}}
\newcommand{\nnWeight}{\mathbf{w}^{\mathrm{nn}}}
\newcommand{\nnGradient}{{F_{\xi}^{\mathrm{nn}}}'}
\newcommand{\nnWeightIter}[1]{\mathbf{w}^{\mathrm{nn}}_{\mathrm{#1}}}
\newcommand{\nnGradientIter}[1]{{{F^{\mathrm{nn}}_{#1}}}'}
\newcommand{\nidInput}{\mathbf{x}^{\mathrm{NID}}}
\newcommand{\nidInputIndexed}[1]{{\mathbf{x}_{#1}^{\mathrm{NID}}}}
\newcommand{\lookup}{\mathrm{lookup}_{\embWeight}}
\newcommand{\nn}{\mathrm{NN}_{\nnWeight}}
\newcommand{\w}{\mathbf{w}}
\newcommand{\f}[1]{f\left(#1\right)}
\newcommand{\F}[1]{F\left(#1\right)}
\newcommand{\expect}[2]{\mathbb{E}_{#1}\left[#2\right]}
\newtheorem{assumption}{Assumption}
\newtheorem{theorem}{Theorem}
\newtheorem{lemma}{Lemma}
\newtheorem{remark}{Remark}
\newcommand\numberthis{\addtocounter{equation}{1}\tag{\theequation}}
\begin{document}

\title{\sys:  An Open, Hybrid System Scaling Deep Learning-based Recommenders up to 100 Trillion Parameters}

\author{Xiangru Lian$^1$, Binhang Yuan$^3$, Xuefeng Zhu$^2$, Yulong Wang$^2$, Yongjun He$^3$, Honghuan Wu$^2$, Lei Sun$^2$, Haodong Lyu$^2$, Chengjun Liu$^2$, Xing Dong$^2$, Yiqiao Liao$^2$, Mingnan Luo$^2$, Congfei Zhang$^2$, Jingru Xie$^2$, Haonan Li$^2$, Lei Chen$^2$, Renjie Huang$^2$, Jianying Lin$^2$, Chengchun Shu$^2$, Xuezhong Qiu$^2$, Zhishan Liu$^2$, Dongying Kong$^2$, Lei Yuan$^2$, Hai Yu$^2$, Sen Yang$^2$, Ce Zhang$^3$, Ji Liu$^1$}
\affiliation{$^1$Kwai Inc. \country{USA}; $^2$Kuaishou Technology, China; $^3$ETH Z\"urich, Switzerland;}
\email{{firstname.lastname}@{1.kwai.com; 2.kuaishou.com; 3.inf.ethz.ch }}

\renewcommand{\shortauthors}{Lian et al.}

\begin{abstract}

Deep learning based models have dominated the current landscape of production recommender systems.
Furthermore, recent years have witnessed an exponential growth of the model scale---from Google's 2016 model with 1 billion parameters to the latest Facebook's model with 12 trillion parameters.
Significant quality boost has come with each jump of the model capacity, which makes us believe the era of 100 trillion parameters is around the corner. However, the training of such models is challenging even within industrial scale data centers. This difficulty is inherited from the staggering heterogeneity of the training computation---the model's embedding layer could include more than $99.99\%$ of the total model size, which is extremely memory-intensive; while the rest neural network is increasingly computation-intensive. 
To support the training of such huge models, an efficient distributed training system is in urgent need.
In this paper, we resolve this challenge by careful co-design of both the optimization algorithm and the distributed system architecture.
Specifically, in order to ensure both the training efficiency and the training accuracy, we design a novel hybrid training algorithm, where the embedding layer and the dense neural network are handled by different synchronization mechanisms; then we build a system called \sys (short for \textbf{p}arallel r\textbf{e}commendation t\textbf{r}aining \textbf{s}ystem with hybr\textbf{i}d
\textbf{a}cceleration) to support this hybrid training algorithm.
Both theoretical demonstrations and empirical studies up to 100 trillion parameters have been conducted to justified the system design and implementation of \sys.
We make \sys publicly available (at \url{https://github.com/PersiaML/Persia}) so that anyone would be able to easily train a recommender model at the scale of 100 trillion parameters.

\end{abstract}

\maketitle

\section{Introduction}
\label{sec:intro}

A recommender system is an important component of Internet services today. Tasks such as click-through rate (CTR) and buy-through rate (BTR) predictions
are widely adopted in industrial applications, influencing the ad revenues at billions of dollar level for search engines such as Google, Bing and Baidu~\cite{su2017improving}. Moreover, $80\%$ of movies watched on Netflix~\cite{gomez2015netflix} and $60\%$ of videos clicked on YouTube~\cite{davidson2010youtube} are driven by
automatic recommendations; over $40\%$ of user engagement on Pinterest are powered by its Related Pins recommendation module~\cite{liu2017related}; over half of the Instagram community has visited recommendation based Instagram Explore to discover new
content relevant to their interests~\cite{acun2020understanding}; up to $35\%$ of Amazon’s revenue is driven by recommender systems~\cite{xie2018personalized,chang2021extreme}.
At \kuaishou, we also observe that recommendation plays an important role for video sharing---more than 300 million of daily active users explore videos selected by recommender systems from billions of candidates.

\begin{figure}[t!]
    \centering\includegraphics[width=.5\textwidth]{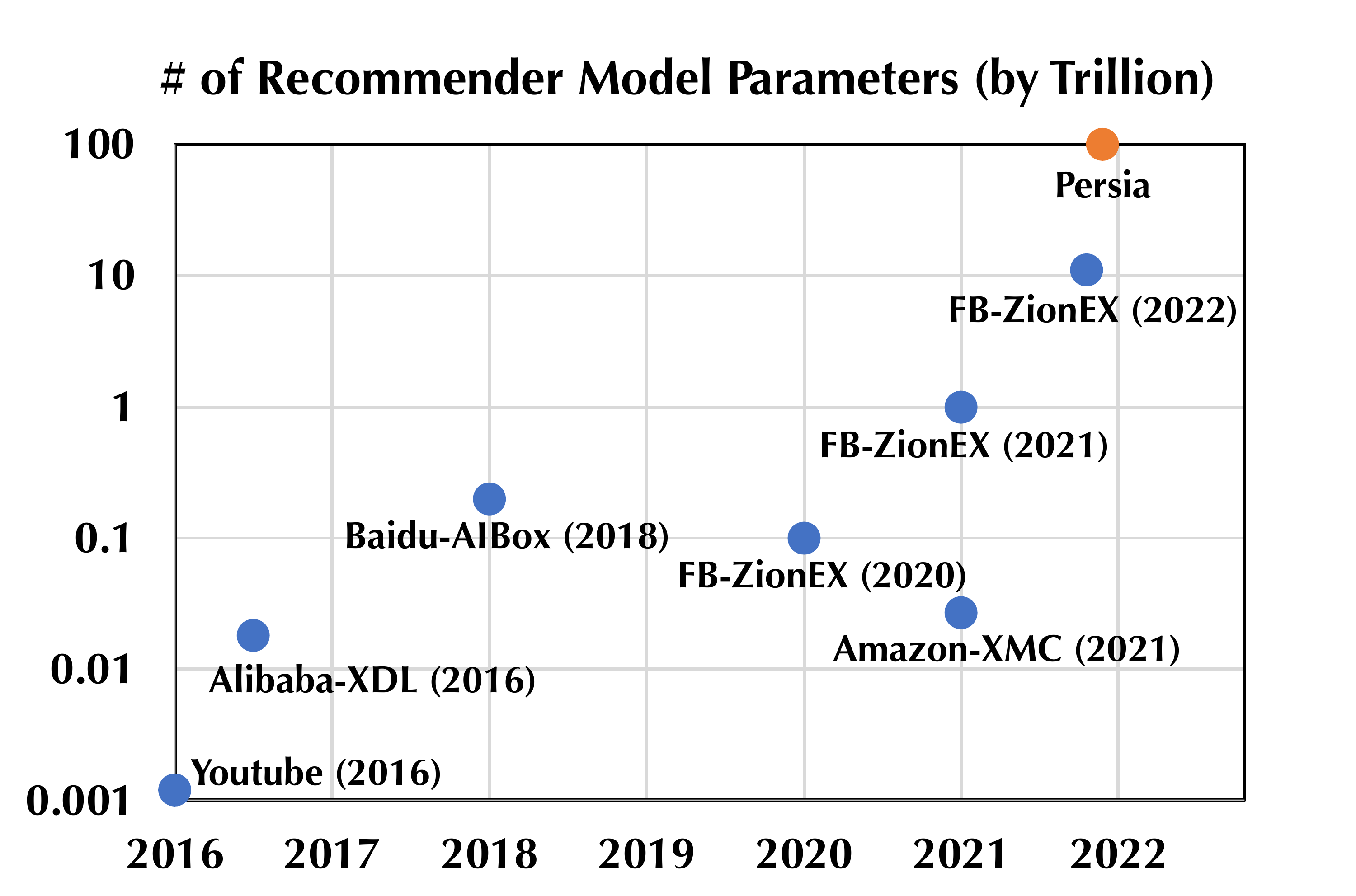}
    \vspace{-1.5em}
    \caption{
    Model sizes of different recommender systems, among which only XDL and AIBox (via PaddlePaddle) are open-source.
    \sys is an open-source training system for deep learning-based recommender systems, which scales up models to the scale of 100 trillion parameters.
    }
    \label{fig:scale}
    \vspace{-1.5em}
\end{figure}

\vspace{0.2em}
\textit{\underline{Racing towards 100 trillion parameters.}}
The continuing advancement of modern recommender models is often driven by the ever increasing model sizes---from Google's 2016 model with 1 billion parameters~\cite{covington2016deep} to Facebook's latest model (2022) with 12 trillion parameters~\cite{mudigere2021high} (See Figure~\ref{fig:scale}).
\textit{Every jump in the model capacity has been bringing in significantly improvement on quality, and the era of 100 trillion parameters is just around the corner.}

Interestingly, the increasing parameter comes mostly from the \textit{embedding layer} which maps each entrance of an ID type feature (such as an user ID~\cite{li2018learning,tang2018personalized} and a session ID~\cite{tan2016improved, twardowski2016modelling, tuan20173d}) into a fixed length low-dimensional embedding vector. Consider the billion scale of entrances for the ID type features in a production recommender system (e.g., \cite{eksombatchai2018pixie,wang2018billion}) and the wide utilization of feature crosses~\cite{cheng2016wide}, the embedding layer usually domains the parameter space, which makes this component extremely \textit{memory-intensive}.
On the other hand, these low-dimensional embedding vectors
are concatenated with diversified Non-ID type features (e.g., image~\cite{wang2017your,wen2018visual}, audio~\cite{van2013deep,wang2014improving}, video~\cite{chen2017attentive,lee2018collaborative}, social network~\cite{hsieh2016immersive,deng2016deep}, etc.) to feed a group of increasingly sophisticated neural networks (e.g., convolution, LSTM, multi-head attention) for prediction(s)~\cite{covington2016deep,zhou2018deep,zhou2019deep,zhang2020model,naumov2020deep,zhao2020autoemb}. Furthermore, in practice, multiple objectives can also be combined and optimized simultaneously for multiple tasks
~\cite{wang2016multi,wang2018explainable, lu2018like, zhao2019recommending,li2020improving}. These mechanisms make the rest \textit{neural network} increasingly \textit{computation-intensive}. 

\vspace{0.2em}




\textit{\underline{Challenges.}} This paper is motivated by the challenges that we were facing when applying existing systems, e.g., XDL from Alibaba~\cite{jiang2019xdl} and PaddlePaddle from Baidu~\cite{paddle}, to the training of models at the 100 trillion parameter scale.
This is a challenging task---as illustrated in Figure~\ref{fig:scale}, most open source systems were only designed for a scale that is at least one order of magnitude smaller; even the largest proprietary systems such as the one reported by Facebook~\cite{mudigere2021high} just two month ago (September 2021) are still $8.3\times$ smaller.
\textit{The goal of this paper is to enable, to our best knowledge, the first open source system that is able to scale in this regime.}


As 100 trillion parameters require at least 200TB to simply store the model (even in \texttt{fp16}), a distributed training system at this scale often consists of hundreds of machines. 
As so, the communication among workers is often a system bottleneck.
While state-of-the-art systems~\cite{jiang2019xdl,raman2019scaling,zhao2019aibox,pan2020dissecting,zhao2020distributed,naumov2020deep,krishna2020accelerating,wilkening2021recssd}
often employ a carefully designed \textit{heterogeneous} architecture (e.g., CPU/GPU, DRAM/SSD) to accommodate the \textit{heterogeneity} in a recommender model (as illustrated in Figure~\ref{fig:problem}), all of them are using a \textit{homogeneous} training algorithm (either synchronous or asynchronous stochastic gradient based algorithms), for the model as a whole.
At the scale of 100 trillion parameters and hundreds of workers, this homogeneity in the training algorithm starts to become a significant problem:

\vspace{0.3em}
\noindent
(1) Synchronous algorithms always use the up-to-date gradient to update the model to ensure the model accuracy. However, the overhead of communications for \textit{synchronous algorithms} starts to
become too expensive to scale out the training procedure, causing inefficiency in running time.

\vspace{0.2em}
\noindent
(2) While \textit{asynchronous algorithm}
have better hardware efficiency, it often leads to a ``significant'' loss in model accuracy at this scale---for production recommender systems (e.g., Baidu's search engine~\cite{zhao2020distributed}). Recall that even 0.1\% drop of accuracy would lead to a noticeable loss in revenue---this is also consistent with our observation at \kuaishou.

Motivated by these challenges, we ask:
\begin{quote}
\em Q1. Can we design an algorithm that can take benefits from both synchronous and asynchronous updates avoiding their disadvantages, to further scale up a recommender system with 100 trillion parameters?
\end{quote}
\begin{quote}
\em Q2. How can we design, optimize, and implement a system to efficiently support such an algorithm?
\end{quote}

\vspace{0.2em}
\textit{\underline{Persia.}} In this paper, we describe \sys, an open source distributed training system 
developed at \kuaishou to support models at the scale of 
100 trillion parameters. 
As we will show in Section~\ref{sec:eval}, this is only possible given its significant speed-up over state-of-the-art systems including both XDL and PaddlePaddle, achieved by the careful \textit{co-design of both the training algorithm and the training system.}
\underline{\em Open Source and Reproducibility:}
We also believe that the ability to train such a large model should be made easily and widely available to everyone instead of just being locked in the hands of a handful of largest companies. We make \sys open source so that everyone who has access to cloud computing service, e.g., Google cloud platform (where we perform \sys's capacity test) can easily setup a distributed training system to reproduce our 100-trillion-parameter model test and to train their own models at this scale\footnote{One can follow our detailed configuration of \sys over Google cloud platform at this link: \url{https://github.com/PersiaML/tutorials/blob/main/src/kubernetes-integration/index.md}).}.

\begin{figure}[t!]
    \centering\includegraphics[width=.475\textwidth]{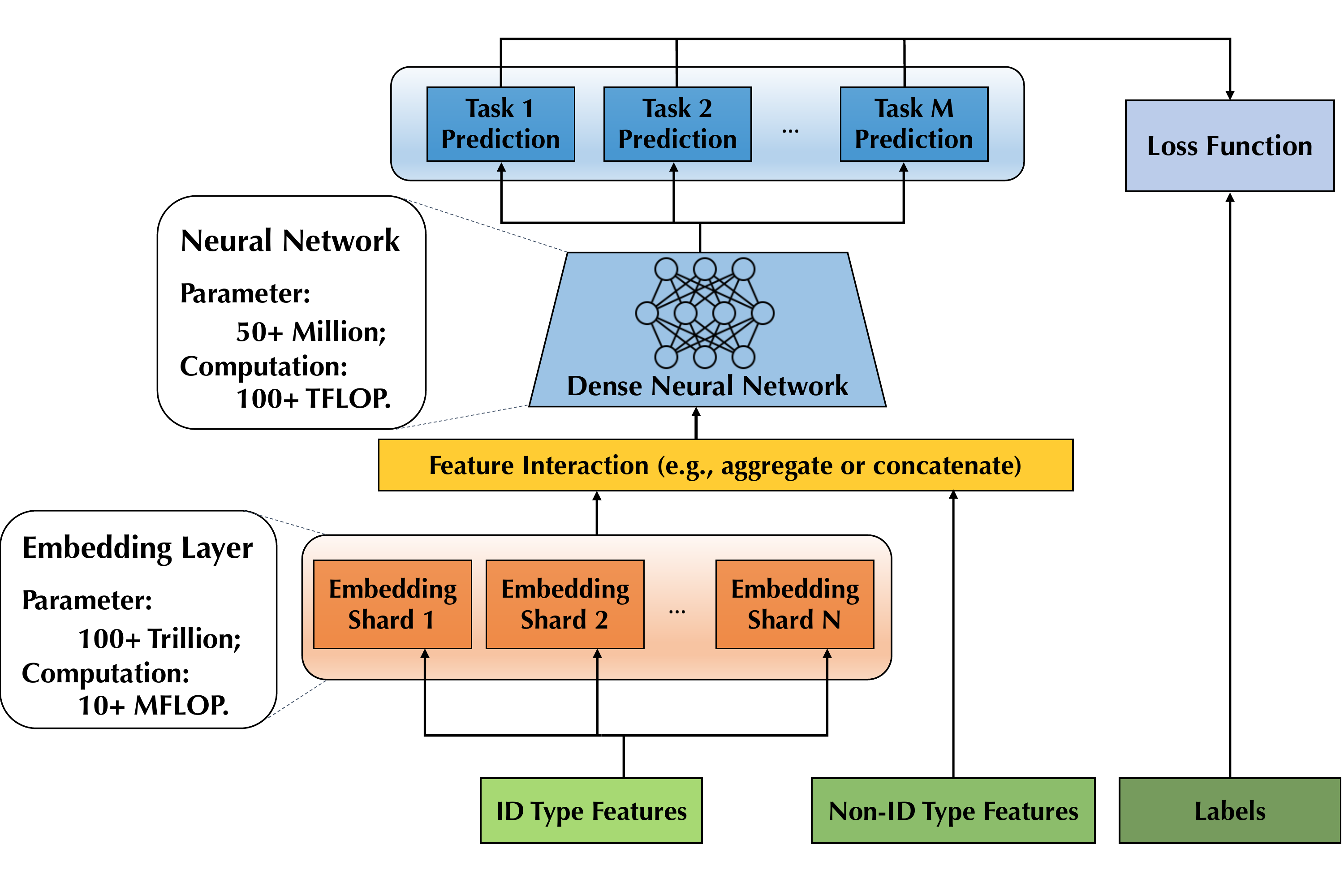}
    \vspace{-2em}
    \caption{An example of a recommender models with $100+$ trillions of parameter in the embedding layer and $50+$ TFLOP computation in the neural network.}
    \label{fig:problem}
    \vspace{-1em}
\end{figure}

\vspace{0.2em}
\textit{\underline{Contributions.}}
\sys is enabled by a set of technical contributions.
The core technical hypothesis of \sys is that, \textit{by using a hybrid and heterogeneous training algorithm, together with a heterogeneous system architecture design, we can further improve the performance of training recommender systems over state-of-the-arts.}

Our \textbf{\underline{first contribution}} is a natural, but novel hybrid training algorithm to tackle the embedding layer and dense neural network modules differently---the embedding layer is trained in an asynchronous fashion to improve the throughput of training samples, while the rest neural network is trained in a synchronous fashion to preserve the statistical efficiency.
We also provide a rigorous theoretical analysis on its convergence behavior and further connect the characteristics of a recommender model to its convergence to justify its effectiveness.

This hybrid algorithm requires us to revisit some decisions with respect to the system architecture and optimizations, to unleash its full potential. 
Our \textbf{\underline{second contribution}} is on the system side.
We design a distributed system to manage the hybrid computation resources (CPUs and GPUs) to optimize the co-existence of asynchronicity and synchronicity in the training algorithm.
We further implement a wide range of system optimizations for memory management and communication optimization---such system optimizations are the key to fully unleash the potential of our hybrid training algorithm. 
We also develop a fault tolerance strategy to handle various potential failures during the training procedure.

\vspace{0.2em}
\textbf{\underline{Last but not least,}} we evaluate \sys using both publicly available benchmark tasks and real-world tasks at \kuaishou. 
We show that \sys scales out effectively and leads to up to $7.12\times$ speedup compared to alternative state-of-the-art approaches~\cite{jiang2019xdl, paddle}.
We further conduct larger-scale scalability experiments up to 100 trillion parameters
on public clouds to ensure public reproducibility.

\vspace{0.2em}
\textit{\underline{Overview.}}
The rest of the paper is organized as follows. We first provide some preliminaries for deep learning recommender systems in Section \ref{sec:pre}. We introduce our hybrid training algorithm in Section \ref{sec:algorithm} and discuss \sys system design and implementation in Section \ref{sec:design}. 
We show the theoretical analysis of the hybrid algorithm in Section \ref{sec:theorem}, present the experimental results in Section \ref{sec:eval}, summarize related work in Section \ref{sec:rel}, and conclude in Section \ref{sec:con}.

\section{Preliminaries}
\label{sec:pre}

This section introduces the basic principle of a typical deep learning based recommendation system (e.g., DLRM \citep{mudigere2021high}) illustrated in Figure~\ref{fig:problem}. We first formalize the distributed recommender training problem; and then give an anatomy of the existing architectures.

\subsection{Problem Formalization}

A typical recommender system takes the training sample in the form:
\begin{align*}
\left[\idInput_\xi,\nidInput_\xi,\mathbf{y}_\xi \right]
\end{align*}
where $\xi$ denotes the index of the sample in the whole dataset. $\idInput_\xi := \{ x_{1; \xi}, x_{2; \xi}, \ldots \}$ is the collection of ID type features in the sample, $\left|\idInput_\xi\right|$ denotes the number of IDs, $\nidInput_\xi$ denotes the Non-ID type features, and $\mathbf{y}_{\xi}$ denotes the label. The ID type feature is the sparse encoding of large-scale categorical information. For example, one may use a group of unique integers to record the microvideos (e.g., noted as $\langle \texttt{VideoIDs}\rangle$) that have been viewed by a user; similar ID type features may include location ($\langle\texttt{LocIDs}\rangle$), relevant topics ($\langle\texttt{TopicIDs}\rangle$), followed video bloggers ($\langle\texttt{BloggerIDs}\rangle$), etc. In our formalization, $\idInput$ is the collection of all ID type features---for the above example, it can be considered as:
\begin{align*}
\idInput \assign \left[ \langle \texttt{VideoIDs}\rangle , \langle\texttt{LocIDs}\rangle, \langle\texttt{TopicIDs}\rangle,  \langle\texttt{BloggerIDs}\rangle, ...\right]
\end{align*}
The Non-ID type feature $\nidInput$ can include various visual or audio features. And the label $\mathbf{y}$ may include one or multiple value(s) corresponding to one or multiple recommendation task(s).

The parameter $\w$ of the recommender system usually has two components:
\begin{align*} 
\w \assign \left[\embWeight, \nnWeight\right] \in \mathbb{R}^{\embDimension + \nnDimension}
\end{align*}
where $\embWeight \in  \mathbb{R}^{\embDimension}$ is the parameter of the embedding layer and $\nnWeight \in  \mathbb{R}^{\nnDimension}$ is the parameter of the rest dense neural network.
We use: $\lookup\left(\idInput\right)$ to denote the concatenation of all
embedding vectors that has correspondence in $\idInput$; $\nn\left(\cdot\right)$ to denote a function parameterized by $\nnWeight$ implemented by a deep neural network that takes the looked up
embeddings and Non-ID features as input and generates the prediction. In the remaining part of this paper, ``NN'' is short for ``neural network''.
The recommender system predicts one or multiple values $\hat{\mathbf{y}}$ by:
\begin{align*} 
\hat{\mathbf{y}}_\xi = \nn\left( \lookup\left(\idInput_\xi\right), \nidInput_\xi \right)
\end{align*}

\textit{It is worth mentioning that while the $\nnWeight$ involved computation can be $10^7\times$ more than the $\embWeight$ involved computation, the size of $\embWeight$ can be $10^7\times$ larger than that of $\nnWeight$, especially when $\embWeight$ contains many cross features.} The imbalance between model scale and model computation intensity is one of the fundamental reasons why the recommender system needs an elaborate training system other than a general purpose deep learning system. 
Formally, the training system essentially solves the following optimization
\begin{align*}
 \min_{\w} \quad \f{\w}
 \assign & \expect{\xi}{\F{\w
  ; \xi}}.\numberthis \label{eq:deciding-dory}
\end{align*}
if we use $\mathcal{L}$ to denote some loss function over the prediction and the true label(s) $\mathbf{y}$, $F$ can be materialized as:
\begin{align*}
F(\w; \xi) \assign \mathcal{L}\left(\nn \left(\lookup\left(\idInput_{\xi}\right), \nidInput_{\xi}\right), \mathbf{y}_{\xi} \right).
\end{align*}
Lastly, we use 
\begin{align*}%
	&\embGradient :=  \nabla_{\embWeight}\F{\embWeight, \nnWeight; \xi}\\
	&\nnGradient :=  \nabla_{\nnWeight}\F{\embWeight, \nnWeight; \xi}
\end{align*}
to denote the gradients of $\embWeight$ and $\nnWeight$ respectively.

\subsection{Anatomy of Existing Architectures}

Existing distributed systems for deep learning based recommender models are usually built on top of the parameter server (PS) framework~\cite{li2014scaling}, where one can add elastic distributed storage to hold the increasingly large amount of parameters of the embedding layer. On the other hand, the computation workload does not scale linearly with the increasing parameter scale of the embedding layer---in fact, with an efficient implementation, a lookup operation over a larger embedding table would introduce almost no additional computations. Thus, a fundamental goal of these systems is to resolve the inconsistency between the computation-intensive neural network and the memory-intensive embedding layer.

One natural solution is to deploy the training task over a hybrid infrastructure as an extension of the original PS framework: the large number of embedding parameters can be sharded across multiple CPU PS nodes for storage and update, whilst the intensive FLOP computation can be assigned to GPU nodes. Thus, the optimization in these systems focuses on avoiding frequent communications of the memory-intensive embedding layer. Towards this end, different designs have been proposed. For example, Alibaba proposes XDL~\cite{jiang2019xdl} that moves the computation of the embedding layer from the GPU worker to the CPU PS node so that only the embedding outputs (instead of all parameters) are transformed to the GPU workers along with the neural network parameters; Baidu designs a hierarchical PS framework~\cite{liu2020distributed} using a hierarchical storage to cache the frequently used parameter close to the GPU workers---only the infrequently used parameters would trigger the slow communication from SSD to GPU.

\textbf{Our Motivation.} 
Existing systems such as XDL and PaddlePaddle update the model in either the pure synchronous fashion or the pure asynchronous fashion. Both generally perform well on training small models. 
However, when training a large-scale model, the synchronous updating suffers from low hardware efficiency while the asynchronous updating usually incurs low accuracy (recall that a tiny loss in accuracy often means a huge loss in revenue in the recommender system). Therefore, it motivates us to design a (sync-async) hybrid algorithm and an efficient system that is able to take benefits from both synchronous and asynchronous updates but avoid their disadvantages.



\begin{figure*}[t!]
    \centering\includegraphics[width=.99\textwidth]{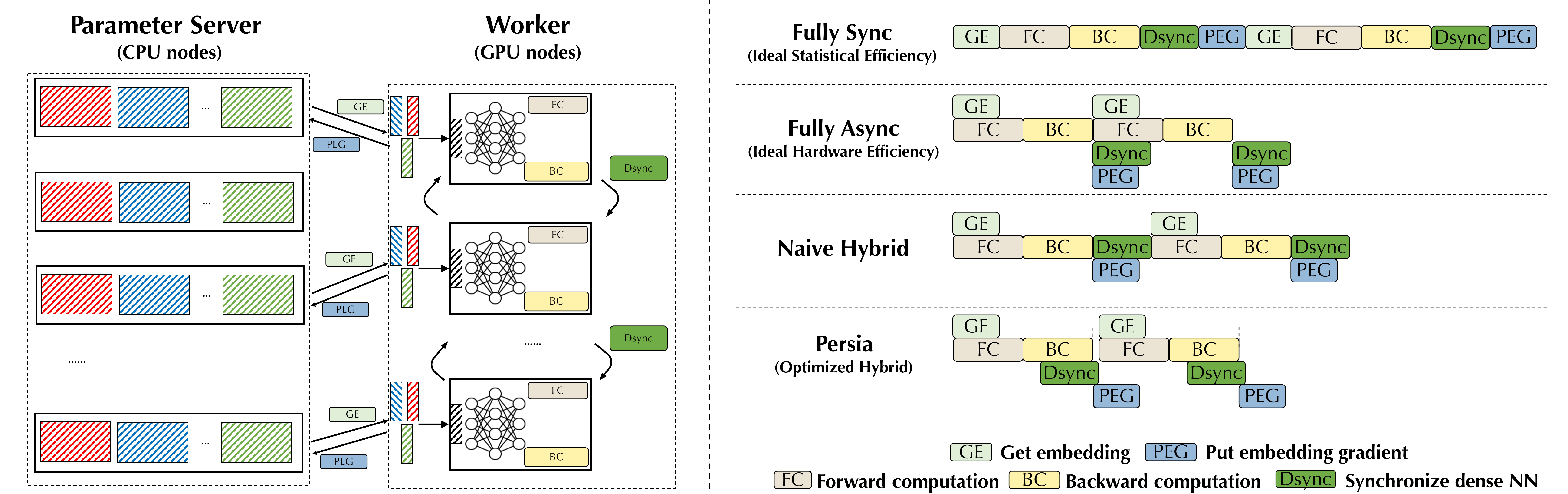}
    \vspace{-0.5em}
    \caption{Left: deep learning based recommender model training workflow over a heterogeneous cluster. Right: Gantt charts to compare fully synchronous, fully asynchronous, raw hybrid and optimized hybrid modes of distributed training of the deep learning recommender model.}
    \label{fig:workflow}
\end{figure*}

\section{Hybrid Training Algorithm}
\label{sec:algorithm}

This section starts with explaining the intuition of the proposed hybrid algorithm, followed by the detailed algorithm description.


\vspace{-0.5em}
\subsection{Why Hybrid Algorithm?}

Training a recommendation mode requires the following essential steps in each iteration as shown in Figure \ref{fig:workflow}-left:


\begin{itemize}[topsep=5pt, leftmargin=*]
\item Preparation of embedding for the training sample(s);
\item Forward propagation of the neural network; 
\item Backward propagation of the neural network;
\item Synchronization of the parameters of the neural network;
\item Update of embedding based on the corresponding gradients.
\end{itemize}

One illustration of the training workflow is shown on in Figure \ref{fig:workflow}-left.
Note that a homogeneous view of the embedding layer and NN would set obstacles for either hardware or statistical efficiency.
For example, consider the fully synchronous mode that we illustrate in Figure \ref{fig:workflow}-right, the above five steps have to happen sequentially---it would be hard to speed up the training procedure based on this mechanism with low hardware efficiency.

\vspace{0.2em}
\textit{\underline{Asynchronous distributed training.}}
It is important to consider appropriate system relaxations for stochastic gradient-based optimizations\footnote{This category of optimization includes SGD, Adam\cite{kingma2014adam}, etc. We slightly abuse the term ``SGD'' to generally refer such optimizations for the rest of the paper.} deployed in a hybrid distributed runtime. There are two key observations from the ML community about asynchronous distributed training of neural networks:
\begin{itemize}[topsep=5pt, leftmargin=*]
\item \textit{Asynchronous updating is efficient with sparse access.}
When individual updating (e.g., SGD iteration) only modifies a small portion of the model's parameters, overwrites are rare and introduce barely no bias into the computation when they do occur~\cite{niu2011hogwild, liancomprehensive}.
\item \textit{Staleness limits the scalability and convergence of asynchronous SGD.} On the other hand, when the updates are heavily overlapped (e.g, dense neural networks), the bias introduced by the discrepancy would limit SGD's scalability and convergence~\cite{lian2015asynchronous, chen2016revisiting}.
\end{itemize}


Without considering the statistical efficiency, the fully asynchronous mode (second row in Figure \ref{fig:workflow}-right) would provide the \textit{optimal hardware efficiency} for distributed recommender model training, where the time of preparing embeddings, synchronizing dense parameters, and updating embedding parameters can be hidden within the computation time for forward and backward propagations of the dense neural network. Unfortunately, as we will illustrate in Section \ref{sec:exp_alg}, the asynchronicity would \textit{hurt the statistical efficiency} and diminish the generalization performance, which is not acceptable for production recommender models ~\cite{zhao2020distributed}.

\vspace{0.5em}
\textbf{Hybrid training algorithm}.  
Based on these observations, one may consider to design different mechanisms for the embedding and dense neural network separately to optimize the training efficiency. 
For this purpose, we introduce a natural, but novel sync-async hybrid algorithm, where the embedding module trains in an asynchronous fashion while the dense neural network is updated synchronously.
Considering the inherited heterogeneity of deep learning based recommender systems, this design would exploit the strength whilst avoid the weaknesses of asynchronous SGD from the algorithmic perspective. Briefly, as we illustrate in the third and forth rows in Figure \ref{fig:workflow}-right, a naive hybrid mode is able to hide the steps of preparing embeddings, and updating embedding parameters within the synchronous training of the dense neural network; further, with some advanced system optimizations (e.g., overlapping computation and communication listed in Section \ref{sec:comm}), the synchronization of the dense parameters would also be able to mostly hidden within the backward propagation of the dense neural network. Thus, \textit{the hybrid algorithm would achieve almost similar hardware efficiency as the fully asynchronous mode without sacrificing the statistical efficiency.}

\vspace{-0.5em}
\subsection{Asynchronous updating for $\embWeight$}
\label{sec:alg_sparse}
To discuss the asynchronous algorithm to update embedding parameter $\embWeight$, given training sample indexed by $\xi$, we introduce the following operations:

\begin{itemize}[topsep=5pt, leftmargin=*]
    \item \texttt{get}$\left(\idInputIndexed{\xi}\right)$: fetch the subset of parameters to generate the embedding of the ID type feature $\idInputIndexed{\xi}$
    ---we use the notation $\embWeightIter{\xi}$ to represent this subset of parameters activated by the sample.

    \item \texttt{put}$\left(\idInputIndexed{\xi}, \embGradientIter {\xi}\right)$: communicate the gradient $\embGradientIter {\xi}$ w.r.t $\idInputIndexed{\xi}$ to the storage of parameter $\embWeight$.

\end{itemize}

Algorithm \ref{alg:sparse} shows the asynchronous updating algorithm for the embedding layer $\embWeight$. Both forward computation task and backward computation task will be executed \textit{without any lock for synchronization}: the forward computation task takes training samples, retrieves the embedding vectors, and sends the corresponding embedding vectors to the proceeding compute units (which handle the NN SGD computation); the backward computation task will receive the gradients of the embedding vectors, and sends it to the parameter storage for parameter updates.

\begin{algorithm}[t!]
\small
 \begin{algorithmic}
 \STATE {\bfseries Context:}{ Forward task for embedding layer.\\}
 \STATE{\bfseries Input:}{ Embedding layer $lookup\left(\cdot\right)$, training data $\mathcal{D}$.\\}
 
 \WHILE{Not converge}
    \STATE{\textit{\color{blue}/* Without any lock: */}\\}
    \STATE{Select a sample from the training set:\: $\idInput_{\xi} \sim \mathcal{D}$;\\}
    \STATE{Get embedding vector(s) w.r.t the sample:\: $\embWeightIter{\xi} \leftarrow$ \texttt{get}$\left(\idInputIndexed{\xi}\right)$;\\}
    \STATE{Send the embedding vector(s) to proceeding units: \: $\embWeightIter{\xi}\downarrow$.\\}
 \ENDWHILE
 \\
 \hrulefill \\
 \STATE{\bfseries Context:}{ Backward task for embedding layer.}
 \STATE {\bfseries Input:}{ Embedding layer $lookup\left(\cdot\right)$, optimizer $\Omega^{\mathrm{emb}}$.}
 \WHILE{Activation's gradients keeps arriving}
    \STATE{\textit{\color{blue}/* Without any lock: */}\\}
    \STATE{Receive gradient of embedding: \: $\embGradientIter{\xi}$;\\}
    \STATE{Send gradients to parameter storage: \: \texttt{put}$\left(\idInputIndexed{\xi},\embGradientIter{\xi}\right)$;\\}
    \STATE{Update embedding parameter: \:$\embWeightIter{t+1} \leftarrow \embWeightIter{t} - \Omega^{\mathrm{emb}}\left(\embGradient \left\{\xi\right\} \right) $.\\}
 \ENDWHILE \\
 \end{algorithmic}
 \caption{Asynchronous updating algorithm for $\embWeight$.}
 \label{alg:sparse}

\end{algorithm}

\begin{algorithm}[t!]
\small
  \begin{algorithmic}
 \STATE{\bfseries Context:}{ A task for dense module (indexed by $k$, $k=1,2,...,K$).}
 \STATE{\bfseries Input:}{ Neural network $\nn\left(\cdot\right)$, embeddings $\embWeightIter{\left(\cdot\right)}$, optimizer $\Omega^{\mathrm{nn}}$.}
 \WHILE{Not converge}
    \STATE{Randomly select $b$ buffered embeddings: \: $\embWeightIter{\xi_1},..., \embWeightIter{\xi_b}\sim \embWeightIter{\left(\cdot\right)}$;\\}
    \STATE{Forward of batch B: \: $F_B = \displaystyle\sum\limits_{i=1}^b \mathcal{L}\left(\nn\left(\embWeightIter{\xi_i}, \nidInput_{\xi_i}\right), \mathbf{y}_{\xi_i}\right)$;\\}
    \STATE{\textit{\color{blue}/* With locks as synchronization barrier: */}\\}
    
    \STATE{Backward of batch B:\: compute gradient $\nnGradientIter{B}$, and $\embGradientIter{\xi_i}$ ;\\}

    \STATE{Sync gradients \textit{with optimization}: \:$ \nnGradientIter{B} \leftarrow \frac{1}{K}\displaystyle\sum\limits_{k=1}^{K} \left[\nnGradientIter{B}\right]_k$}

    \STATE{Update dense parameter:\: $\nnWeightIter{t+1}\leftarrow \nnWeightIter{t} -  \Omega^{\mathrm{nn}} \left(\nnGradientIter{B}\right)$. \\}
    \STATE{Send back activation's gradients:\: $\left[ \embGradientIter{\xi_1}, \embGradientIter{\xi_2} ...\embGradientIter{\xi_b} \right]\uparrow$}
  \ENDWHILE
\end{algorithmic}
 \caption{Synchronous updating algorithm for $\nnWeight$.}
 \label{alg:dense}
\end{algorithm}

\vspace{-0.5em}
\subsection{Synchronous updating for $\nnWeight$}
\label{sec:alg_dense}
Our training algorithm for the rest neural network looks similar to the standard distributed training of deep neural networks---the key difference is that our algorithm concatenates the embedding activations as part of input. The algorithm is illustrated in Algorithm \ref{alg:dense}. Notice that we adopt a mini-batch based SGD algorithm in contrast to the sample-based SGD in the asynchronous counterpart. 


Since the hybrid algorithm is quite different from what has been assumed by existing systems~\cite{li2014scaling,sergeev2018horovod,jiang2019xdl,liu2020distributed,paddle}.
To fully unleash its potential, we have to carefully design the system and optimize its performance---we introduce the system design and implementation of \sys in Section \ref{sec:design}. On the other hand, it is also important to understand the statistical efficiency of the proposed hybrid training algorithm, thus we provide a theoretical analysis about the convergence guarantee of the hybrid algorithm in Section \ref{sec:theory}.

\section{System Design and Implementation}
\label{sec:design}

In this section, we first introduce the design of \sys to support the hybrid algorithm; then we discuss a wide range of implementations to optimize the computation and communication utilization.

\vspace{-0.5em}
\subsection{System Design}

The system design includes two main fundamental aspects: i) the placement of the training workflow over a heterogeneous cluster, and ii) the corresponding training procedure over the hybrid infrastructure. To enlighten the system implementation, we also list the implementation goals of the \sys system here.  

\vspace{0.5em}
\textbf{Workflow placement over heterogeneous cluster.} To support the distributed training of deep learning based recommender model, a \underline{straightforward utilization of the PS paradigm} (provided by general purpose deep learning frameworks, e.g., TensorFlow \cite{abadi2016tensorflow}) would place the storage and update of both embedding and NN parameters in a group of PS nodes (i.e., CPU machines) and the computation of forward and backward propagations in a group of worker nodes (i.e., GPU machines). However, this would be far from efficient and even impossible for deployment. For example, such recommender model would easily exceed the GPU RAM; and the uniform view of embedding and NN modules would introduce a large amount of unnecessary network traffic.

Some \underline{optimized PS architectures} are proposed to optimize the training of deep learning recommender models by rearrange the functionalities in the PS paradigm. For example, XDL \cite{jiang2019xdl} designs the advanced model server to extend the original functionality of PS node to manage the learning (forward and backward propagations) of the embedding module. The hierarchical PS architecture proposed by Baidu \cite{zhao2020distributed} adopts a colocated PS framework with a sophisticated caching schema to reduce the communication overhead.

\underline{New challenges for the layout of 100-trillion-parameter models.} In order to support the recommender models with one or two magnitude larger models, \sys should provide efficient \textit{autoscaling}. Thus, we introduce the following modules, where each module can be dynamically scaled for different model scales and desired training throughput:

\begin{itemize}[topsep=5pt, leftmargin=*]
    \item A \textbf{data loader} that fetches training data from distributed storages such as Hadoop, Kafka, etc; 
    \item An embedding parameter server (\textbf{embedding PS} for short) that manages the storage and update of the parameters in the embedding layer $\embWeight$;
    \item A group of \textbf{embedding workers} that runs Algorithm \ref{alg:sparse} for getting the embedding parameters from the embedding PS; aggregating embedding vectors (potentially) and putting embedding gradients back to embedding PS;
    \item A group of \textbf{NN workers} that runs the forward-/backward- propagation of the neural network $\nn\left(\cdot\right)$.
\end{itemize}

Considering the heterogeneity between the embedding and NN modules, \sys adopts different communication paradigms for training process: i) the \textit{PS paradigm} between embedding PS and embedding workers (running on CPU nodes) to manage the training of the embedding layer, while ii) the \textit{AllReduce paradigm} among NN workers (running on GPU nodes) for the NN.

\begin{figure}[t!]
    \centering\includegraphics[width=.475\textwidth]{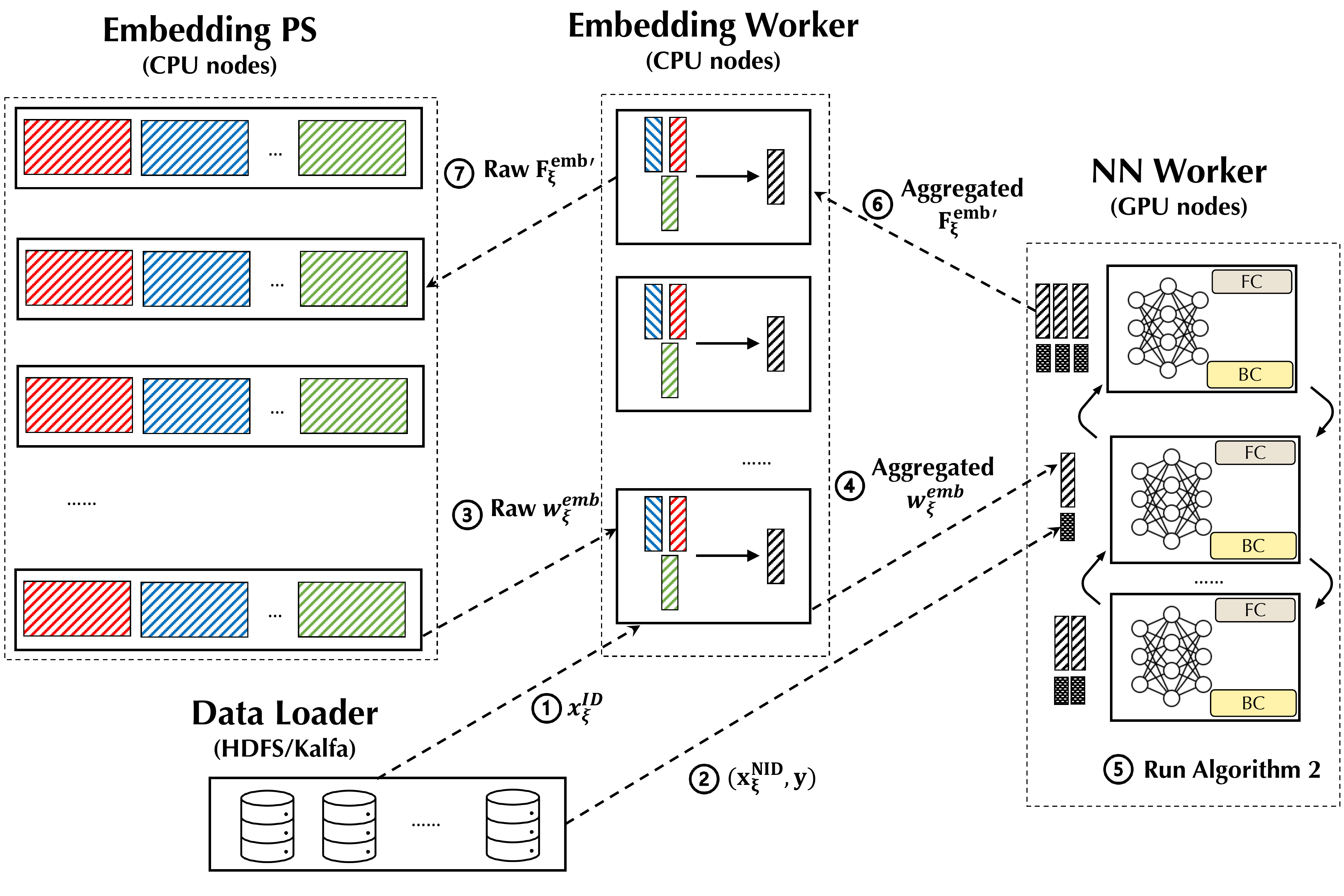}
    \vspace{-1em}
    \caption{The architecture of \sys. \sys includes a data loader module, an embedding PS module, a group of embedding workers over CPU nodes, and a group of NN workers over GPU instances.}
    \label{fig:framework}
    \vspace{-1em}
\end{figure}

\vspace{0.5em}
\noindent\textbf{Distributed training procedure.}
Logically, the training procedure is conducted by \sys in a \textit{data dispatching} based paradigm as below (see Figure \ref{fig:framework}): 
\begin{enumerate}[topsep=5pt, leftmargin=*]
    \item The data loader will dispatch the ID type feature $\idInputIndexed{\left(\cdot\right)}$ to an embedding worker---the embedding worker will generate a unique sample ID\footnote{Note that the unique ID $\xi$ will be used to locate the embedding worker that generates this ID---this could simply be implemented by using the first byte to encode the rank of this embedding worker.} $\xi$ for this sample, buffer this sample ID with the ID type feature $\idInputIndexed{\xi}$ locally, and return this ID $\xi$ back the data loader; the data loader will associate this sample's Non-ID type features and labels with this unique ID.
    
    \item Next, the data loader will dispatch the Non-ID type feature and label(s) $\left(\nidInputIndexed{\xi},\mathbf{y}_\xi\right)$ to a NN worker.
    
    \item Once a NN worker receives this incomplete training sample, it will issue a request to pull the ID type features' ($\idInput_\xi$) embedding $\embWeightIter{\xi}$ from some embedding worker according to the sample ID $\xi$---this would trigger the forward propagation in Algorithm \ref{alg:sparse}, where the embedding worker will use the buffered ID type feature $\idInputIndexed{\xi}$ to \texttt{get} the corresponding $\embWeightIter{\xi}$ from the embedding PS.

    \item Then the embedding worker performs some potential aggregation of original embedding vectors.
    When this computation finishes, the aggregated embedding vector $\embWeightIter{\xi}$ will be transmitted to the NN worker that issues the pull request.
    \item Once the NN worker gets a group of complete inputs for the dense module, it will create a mini-batch and conduct the training computation of the NN according to Algorithm \ref{alg:dense}. Note that the parameter of the NN always locates in the device RAM of the NN worker, where the NN workers synchronize the gradients by the AllReduce Paradigm.
    \item When the iteration of Algorithm \ref{alg:dense} is finished, the NN worker will send the gradients of the embedding ($\embGradientIter{\xi}$) back to the embedding worker (also along with the sample ID $\xi$).
    \item The embedding worker will query the buffered ID type feature $\idInputIndexed{\xi}$ according to the sample ID $\xi$; compute gradients $\embGradientIter{\xi}$ of the embedding parameters and send the gradients to the embedding PS, so that the embedding PS can finally compute the updates according the embedding parameter's gradients by its SGD optimizer and update the embedding parameters.
\end{enumerate}

\textbf{System implementation goals.}
To efficiently support the hybrid training algorithm, we listed the following system design goals for \sys:

\vspace{0.5em}
\noindent\underline{\textit{Fill the asynchronicity and synchronicity gap.}}
One central functionality of \sys system is to handle the heterogeneity inherited from the hybrid training algorithm. This would request \sys to seamlessly connect the forward- and backward- propagation during the training phase---the coordination of embeddings and gradients transmitted in a large-scale cluster is a unique challenge for \sys.

\vspace{0.5em}
\noindent\underline{\textit{Utilize the heterogeneous clusters efficiently.}}
To achieve the actual performance gain, \sys needs to include different mechanisms to fully utilize the computation resources (e.g., CPUs, GPUs) given diversified link connections. This demands an \textit{efficient memory management} and a group of \textit{optimized communication mechanisms}.  

\vspace{0.5em}
\noindent\underline{\textit{Provide effective fault tolerance.}}
With the large number of machines that training requires, effective fault tolerance is necessary. Furthermore, the hybrid algorithm would request more complex mechanisms to manage the heterogeneous computation and communication, which poses additional challenges for fault tolerance.

\vspace{-0.5em}
\subsection{System Optimizations and Design Decisions}

We enumerate the implementation details to fulfill the design goals.

\subsubsection{Fill the Async/Sync Gap}
\label{sec:gap}

To fill the gap between synchronous and asynchronous updates, both embedding and NN workers implement some buffering mechanisms.

\vspace{0.5em}
\textbf{NN worker buffer mechanism.}
Since we adopt a \textit{GPU-pull} based schema for the hybrid training procedure between the NN worker and the embedding worker,
each NN worker will locally maintain an \textit{input sample hash-map} keyed on the sample ID $\xi$ and valued on tuples of Non-ID type feature $\nidInputIndexed{\xi}$ and label $\mathbf{y}_{\xi}$.
In the forward propagation, once a NN worker receives the Non-ID type feature and label, it will first insert the key-value pair $\left(\mathrm{key}:\xi, \: \mathrm{value}:\left(\nidInputIndexed{\xi},\mathbf{y}_{\xi}\right) \right)$ to the input sample hash-map; 
and then send the request of the embedding vector to the embedding worker. 
Later, when the embedding vector $\embWeightIter{\xi}$ arrives in the NN worker from an embedding worker, 
the NN worker will pop the key-value pair $\left(\mathrm{key}:\xi, \: \mathrm{value}:\left(\nidInputIndexed{\xi},\mathbf{y}_{\xi}\right) \right)$ from the input sample hash-map, consume $\embWeightIter{\xi}$, $\nidInputIndexed{\xi}$, $\mathbf{y}_{\xi}$ in the mini-batch for the SGD computation within the GPU. 
In the backward propagation, once the computation is done, the NN worker will use the sample ID $\xi$ to locate the embedding worker, and then send the gradient of the embedding $\embGradientIter{\xi}$ to this embedding worker.

\vspace{0.5em}
\textbf{Embedding worker buffering mechanism.}
To support the pull request from NN worker, an embedding worker also needs to locally maintain a \textit{ID type feature hash-map} that is keyed on sample ID $\xi$ and valued on ID type feature $\idInputIndexed{\xi}$.
In the forward propagation, when the embedding worker receives a new ID type feature, it will generate a unique ID for this sample, store the key-value pair $\left(\mathrm{key}:\xi,\: \mathrm{value}: \idInputIndexed{\xi} \right)$ in the ID type feature hash-map. Then the embedding worker will \texttt{get} the corresponding embedding parameter $\embWeightIter{\xi}$ from embedding PS and return it to the NN worker.
In the backward propagation, once the embedding worker receives the gradient of the embedding vector $\embGradientIter{\xi}$ from a NN worker,
the embedding worker will find the corresponding ID type feature $\idInputIndexed{\xi}$. Note that this is implemented as a local search in the ID type feature hash-map---the sample ID $\xi$ serves as the key to retrieve $\idInputIndexed{\xi}$. Later, the embedding worker can \texttt{put} the gradient of the embedding parameters back to the embedding PS.

\subsubsection{Persia Memory Management}
\label{sec:memory}

Memory management is an important component for \sys to efficiently utilize the hybrid infrastructure, especially for the embedding PS which is responsible for maintaining trillions of embedding parameters. Comprehensively, the embedding PS works like a standard PS, which is not significantly different from a distributed key-value store. As we illustrated in Figure \ref{fig:memory}, when retrieving the embedding parameters, an embedding worker first runs an identical global hashing function to locate the embedding PS node that stores the parameters; once the request arrives in the PS node, the parameter can be acquired in an LRU cache as we explain below.

\begin{figure}[t!]
    \centering\includegraphics[width=.495\textwidth]{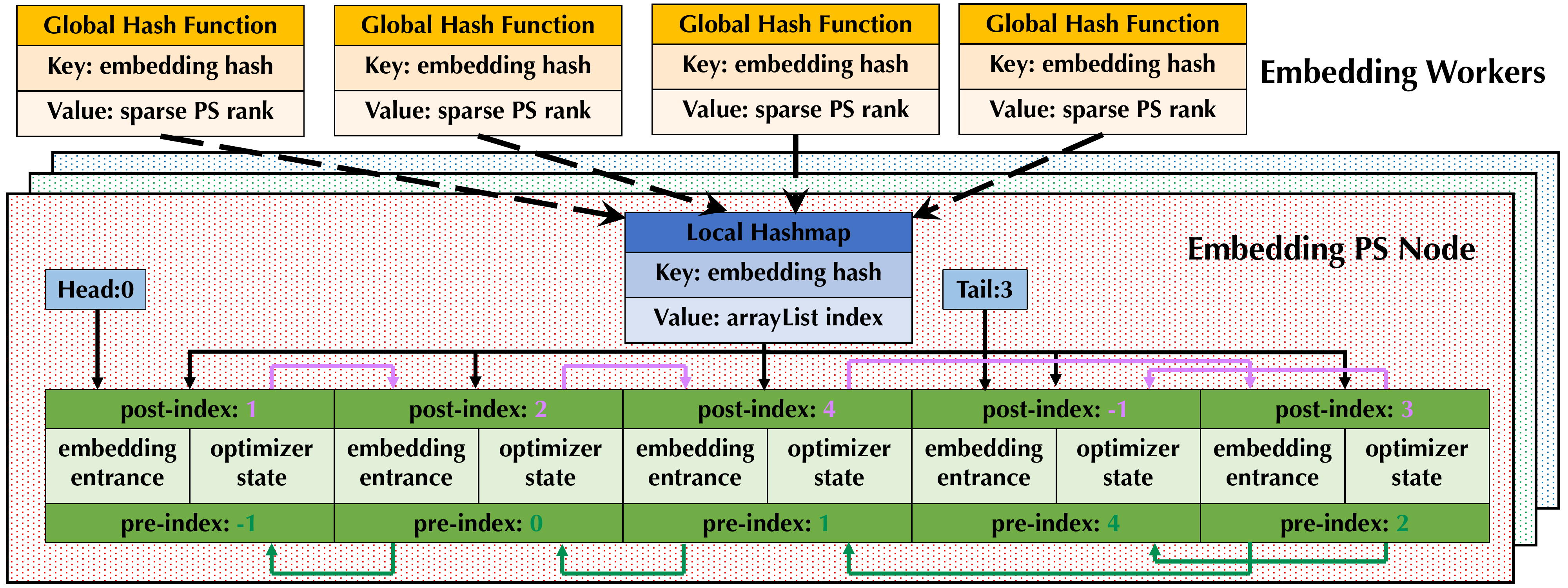}
    \vspace{-1.5em}
    \caption{Memory management of \sys's embedding PS node. The design is based on a LRU cache implemented by hash-map and array-list.}
    \label{fig:memory}
    \vspace{-1.5em}
\end{figure}

\vspace{0.5em}
\textbf{LRU cache implementation.} \sys leverages LRU cache\footnote{Standard LRU implementation is based on hash-map and doubly linked list, e.g., \cite{lru}.} to maintain the embedding parameters in RAM. We use array-list and hash-map to implement the LRU. Instead of a doubly linked list where the pointer stores a memory address, we adopt an array-list design where the pointer stores the index of the pre- or post-  entrance in the array; similarly, the hash-map's value also stores the corresponding embedding parameter's index in the array instead of the memory address. Besides the pre- and post- indices, each item in the array also includes two fields: the embedding vector and the optimizer states corresponding to this embedding vector.
There are two advantages by utilizing an array-based linked list: i) this mechanism avoids frequent allocation and deallocation of fragmented memory blocks---this cost is not negligible when each linked list may contain billions of entrances; ii) since pointer (that stores the memory address) does not exist in the data structure, serialization and deserialization become a straightforward memory copy, which is far more convenient and efficient---this is helpful for periodic saving and loading checkpoints for fault tolerance and model sharing. Note that to fully utilize the CPU cores and RAM in the embedding PS node, we utilize multiple threads in the LRU implementation. Each thread manages a subset of the local hash-map and the corresponding array-list; when there is a request of \texttt{get} or \texttt{put}, the corresponding thread will lock its hash-map and array-list until the execution is completed.




\subsubsection{Communication Optimization}
\label{sec:comm}

To fully utilize the computation power in the heterogeneous cluster, \sys implements
a range of system optimizations to decrease the communication overhead.

\vspace{0.5em}
 \textbf{Optimized communication among NN workers.}
The optimization of AllReduce communication paradigm among NN workers in \sys is the key for \textit{hiding communication overhead within the backward computation} of the neural network. This functionality is implemented based on \bagua~\cite{gan2021bagua}, an open-source general-purposed distributed learning system optimized for data parallelism, also released by \kuaishou.   
Currently, \sys utilizes \bagua's centralized synchronous full-precision communication primitive (equivalent to AllReduce) by default, in an attempt to preserve the accuracy. \sys leverages \bagua for the synchronization among NN workers because additional system optimization enabled by \bagua can be directly adopted, including 
\textit{tensor bucketing, memory flattening}, \textit{hierarchical communications}, etc.
Notice that there are other communication primitives provided by \bagua which could potentially further improve the training throughput; however, it is unclear if these communication primitives would hurt the statistical efficiency, we leave this exploration as an interesting future work.

\vspace{0.5em}
\textbf{Optimized remote procedure call.} The point-to-point communication: i) between NN workers and embedding workers, and ii) between embedding workers and embedding PS is implemented by remote procedure call (RPC). Unlike the traditional usage of RPC, where the communication is mainly responsible for transmitting small objects with complex serialization and deserialization mechanism, \sys demands a RPC implementation that is efficient for communicating tensors stored in a large continuous memory space. As so, \sys abandons the protocol buffer based implementation (adopted by gRPC~\cite{grpc} and bRPC~\cite{brpc} for other learning systems), which introduces significant overhead for communicating tensors; instead, \sys adopts a simple but efficient zero-copy serialization and deserialization mechanism targeting for tensors which directly uses memory layout to serialize and deserialize tensors. Further, as we mentioned in Section \ref{sec:memory}, since tensors on host RAM are allocated in large pages, the TLB lookup time is also reduced significantly, which accelerates the copying procedure from host RAM to the RAM of the network adapter.

\vspace{0.5em}
\textbf{Workload balance of embedding PS.}
\label{sec:embPSwork}
The embedding parameter storage is implemented as a sharded PS to support the query and update of the parameter in the embedding layer. 
The key challenge in embedding PS implementation is about the workload balance of query and update about the embedding parameter. Initially, we adopt a straightforward design by distributing the embedding parameter according to the feature groups. A sub-group of CPU instances are allocated to manage a partition of semantic independent embeddings.
We find that in practice, this would easily lead to congestion in the access of some feature groups during training---
the access of training data can irregularly lean towards a particular embedding group during the online learning procedure in industrial-scale applications. We solve this issue by adopting an alternative partition of the embedding parameter: the embeddings inside a feature group are first \textit{uniformly shuffled} and then \textit{evenly} distributed across embedding PS nodes. We observe that this design effectively diminishes the congestion of the embedding parameter access and keeps a balanced workload for the embedding PS.

\vspace{0.5em}
\textbf{Communication compression.}
To reduce the network traffic, we adopt both lossless and lossy compression mechanisms for the communication request between embedding and NN workers. The network bandwidth connecting GPU instances is limited---besides the AllReduce operation that leverages this bandwidth, another noticeable utility is the communication of embedding vectors in the forward propagation (\textcircled{\small{4}} in Figure \ref{fig:framework}) and its gradient in the backward propagation (\textcircled{\small{6}} in Figure \ref{fig:framework}). The communication between GPUs enabled by advanced connections like GPUDirect RDMA is $10\times$ faster than that between GPU and CPU nodes through PCI-e \cite{gpudirect}, which could leave the communication of embedding activation and its gradients as a bottleneck.
Note that although plenty of lossy compression schema has been proposed for distributed learning, one should be cautious of applying them in distributed recommender model training---for a commercial recommender system, as mentioned before, even a drop of $0.1\%$ accuracy is not affordable~\cite{zhao2020distributed}.
To reduce such network traffic, 
we apply a lossless compression mechanism for the index component and a discreet lossy compression mechanism for the value component.

\vspace{0.5em}
\noindent\underline{\textit{Lossless compression.}} For the index component of the embedding, instead of representing a batch of samples as a list of vectors, where each vector containing all IDs (represented by \texttt{int64}) of a sample, we represent a batch as a hash-map, where the key is unique IDs in the whole batch, and the value corresponding to each unique ID is the indices of the samples in the batch containing this ID. Since the batch size is relatively small ($\leq65535$), the indices can be represented using \texttt{uint16} instead of \texttt{int64} without losing any information.

\vspace{0.5em}
\noindent\underline{\textit{Lossy compression.}} For the value component, We adopt a discreet \texttt{fp32} to \texttt{fp16} compression. Notice that a uniform mapping from \texttt{fp32} to \texttt{fp16} would harm the statistic efficiency significantly, so we defines an nonuniform mapping method:  
suppose $\lVert \cdot \rVert_\infty$ represents the $L_{\infty}$ norm of a vector, $\kappa$ represents a relatively large constant scalar. 
In the compression side, each \texttt{fp32} vector block $\mathbf{v}$ is first scaled by $\frac{\kappa}{\lVert \mathbf{v} \rVert_\infty}$ and then converted to a \texttt{fp16} block vector. In the decompression side, the compressed block vector $\tilde{\mathbf{v}}$ communicated as \texttt{fp16} is first converted back to a \texttt{fp32} vector and then divided by $\frac{\kappa}{\lVert \mathbf{v} \rVert_\infty}$. 

\subsubsection{Fault Tolerance}
\label{sec:fault}

Failure can happen frequently considering the large number of nodes that participate in the hybrid training.
Towards this end, \sys implements a group of fault tolerance mechanisms to handle different failures in the cluster.
Two observations are interesting:

\begin{itemize}[topsep=5pt, leftmargin=*]
\item The infrequent loss of parameter update of the embedding layer is usually negligible for convergence, while the responsive time of the embedding parameter query would be important for the end-to-end training time; 
\item Any drop of the model synchronization by the NN worker that runs the dense synchronous training algorithm is vital for convergence.
Based on these observations, \sys implements the following mechanisms for different components to handle different system failures.
\end{itemize}

Since the \underline{data loader} should be able to run on top of any other popular distributed storage systems (e.g., HDFS), \sys relies on their own recovery schema once an instance failure happens. Notice that \sys mainly considers the online training setting, where no shuffling schema is required by \sys's data loader.

The \underline{embedding PS} should be responsive during the hybrid training execution. For this purpose, the embedding PS node will put in-memory LRU cache (introduced in Section \ref{sec:memory}) in a shared memory space---by this fashion, once a process-level failure happens, the process can automatically restart and attach to the consistent shared-memory space without influencing any other instances of the embedding PS. Additionally, embedding PS nodes will periodically save the in-memory copy of the embedding parameter shard, with the advance of our LRU implementation, check-pointing is very efficient (also see Section \ref{sec:memory}).

The \underline{embedding worker} has no fault recovery schema---once a failure happens, the local buffer of the ID type feature hash-map will be simply abandoned without any recovery attempts.

By contrast, the dense module cannot afford any drop of model synchronization. As so, the \underline{NN worker} would also periodically save the synchronized model as the checkpoint with the same frequency as the embedding PS. Once a failure of GPU instances happens, all the GPU instances will abandon their local copy of the model, load the latest checkpoint, and continue the execution of the dense synchronous training algorithm.

\section{Theoretical Analysis} \label{sec:theory}
\label{sec:theorem}
In this section, we show the convergence rate of the proposed hybrid algorithm adopted by \sys. In short, the proposed hybrid algorithm converges and admits a similar convergence rate (or total complexity equivalently) to the standard synchronous algorithm. That is to say that the hybrid algorithm takes a significant benefit from asynchronicity in the system efficiency but sacrifices very little for the convergence rate.  

To show the detailed convergence rate, we first provide the essential updating rule of the hybrid algorithm in \sys: 
\begin{equation}
\begin{aligned}
\embWeightIter{t+1} = &\embWeightIter{t} - \gamma \embGradientIter{t} \\ 
\nnWeightIter{t+1} = & \nnWeightIter{t} - \gamma \nnGradientIter{t}, 
\end{aligned}%
\label{eq:59f6c90b}
\end{equation}
where $\embGradientIter{t}$ and $\nnGradientIter{t}$ are notations for gradients of $\embWeightIter{t}$ and $\nnWeightIter{t}$, short for respectively:
\begin{align*}%
	\embGradientIter{t} := & \nabla_{\embWeight}\F{\embWeightIter{\mathD (t)}, \nnWeightIter{t}; \xi_{t}},\\ 
	\nnGradientIter{t} := & \nabla_{\nnWeight}\F{\embWeightIter{\mathD (t)}, \nnWeightIter{t}; \xi_{t}},%
\end{align*}
where $D(t) \leqslant t$ denotes some iterate earlier than the current iterate $t$. This is because of the asynchronous update for the embedding layer. Since the dense neural network adopts the synchronous update, it always uses the up to date value of $\nnWeightIter{t}$, while the used value of the embedding layer may be from some early iterate. This is the cost of using asynchronous updating, but we will see that the cost is very minor comparing to our gain in system efficiency.

To ensure the convergence rate, we still need to make some commonly used assumptions as follows:
\begin{assumption}%
\label{assumption:main} We make the following commonly used assumptions for analyzing the stochastic algorithm:%

\begin{itemize}[topsep=5pt, leftmargin=*] %

\item \textbf{Existence of global minimum.} Assume \[f^{*}\assign \min_{\w}f\left( \w \right)\] exists.

\item \textbf{Lipschitzian gradient.} The stochastic gradient function
$F'\left(\cdot; \xi\right)$ is differentiable, and $L$-Lipschitzian for all $\xi$:
\begin{align}
& \left \| F' \left(\w; \xi\right) -  F' \left(\bar{\w}; \xi\right) \right\| \leqslant L \left\|\w - \bar{\w} \right\|, \nonumber \\ 
& \quad \forall \xi \in \Xi, \forall \w, \bar{\w} \in \mathbb{R}^{\embDimension+\nnDimension}.\label{eq:able-snipe}
\end{align}

\item {\textbf{Bounded variance.}} The variance of the stochastic gradient is bounded: there exists a constant $\sigma\geqslant 0$ such that
\begin{align}%
\mathbb{E} \left[ \left\| F'\left(\w; \xi \right)-f' \left(\w\right) \right\|^2 \right] \leqslant \sigma^2 \quad \forall \w. \label{eq:credible-bird}
\end{align}

\item {\textbf{Bounded staleness.}} All sparse model update delays are bounded: there exists a constant $\tau \geqslant 0 $ such that%
\begin{align}%
t - D(t) \leqslant & \tau, \quad \forall t. \label{eq:c21fc02e}
\end{align}

In practice $\tau$ is the number of samples whose embeddings are retrieved but their gradients are not yet updated into the model parameters. In \sys this value is less than 5 for most cases.

\end{itemize}
\end{assumption}

\begin{remark}
The assumptions of {\bf existence}, {\bf bounded variance}, and {\bf bounded staleness} are commonly used ones. The {\bf bounded staleness} assumption is due to the asynchronous update for $\embWeight$. In practice, $t-D(t) \leqslant \tau$ means that any element in $\embWeight$ is updated up to $\tau$ times between reading it and writing it within training the same sample. In \sys this value of $\tau$ is less than 5 for most scenarios. The {\bf frequency} assumption reflects the intrinsic property of the recommendation system. For each sample $\xi$, if every ID type feature $x$ has the same probability of being in $\idInputIndexed{\xi}$, then $\alpha = \frac{1}{\embDimension}$. On the other hand, if an ID type feature $x$ is contained by every sample $\xi$'s ID type features $\idInputIndexed{\xi}$, then $\alpha = 1$.
\end{remark}

Then we obtain the following convergence result: 
\begin{theorem}%
	\label{thm:9853187e}
Denote $\alpha$ to be the constant such that for each ID the probability of a sample containing it is smaller than $\alpha$.
  Under Assumption \ref{assumption:main}, with learning rate $\gamma$ in \eqref{eq:59f6c90b} as

\begin{align*}
\gamma = \frac{1}{L + \sqrt{TL} \sigma + 4\tau L \alpha}
\end{align*}
\sys admits the following convergence rate:%

\begin{align}%
    \frac{\sum_{t = 0}^{T - 1} \mathbb{E} \left[ \left\| f' (\w_t) \right\|^2\right]}{T} \lesssim & \underbrace{\frac{\sigma}{\sqrt{T}} + \frac{1}{T} +}_{\text{rate of vanilla SGD}} \frac{\alpha\tau}{T}
             \label{eq:2d1ebf2a}
	\end{align}%
	where $\lesssim$ means
	``smaller than or equal to up to a constant factor'', and $L, G$ are all
	treated as constants for simplicity.
	For example $a_{t} \lesssim b_{t}$ means that there exists a constant $C>0$ such that $a_{t}\le C
		b_{t}$ for all $t$.
\end{theorem}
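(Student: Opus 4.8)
The plan is to follow the standard non-convex analysis of stochastic gradient descent, treating the stale embedding read as a controllable perturbation and exploiting the sparsity of the ID-type features to keep the staleness penalty proportional to $\alpha\tau$ rather than to $\tau$. First I would invoke the $L$-smoothness of $f$ (which descends from the per-sample Lipschitz-gradient assumption \eqref{eq:able-snipe}) on the joint iterate $\w_t = [\embWeightIter{t},\nnWeightIter{t}]$. Writing $g_t := [\embGradientIter{t},\nnGradientIter{t}]$ for the combined stochastic gradient and substituting the update $\w_{t+1} = \w_t - \gamma g_t$ from \eqref{eq:59f6c90b}, the descent lemma gives a one-step bound of the form $\f{\w_{t+1}} \le \f{\w_t} - \gamma\langle f'(\w_t), g_t\rangle + \tfrac{L\gamma^2}{2}\|g_t\|^2$.

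Next I would take the conditional expectation over the fresh sample $\xi_t$. The crucial observation is that $g_t$ is an unbiased estimate not of $f'(\w_t)$ but of $f'(\hat\w_t)$, where $\hat\w_t := [\embWeightIter{\mathD(t)},\nnWeightIter{t}]$ differs from $\w_t$ only in the embedding block, since the dense block is synchronous and hence current. I would split the inner product by the polarization identity $\langle a,b\rangle = \tfrac12(\|a\|^2 + \|b\|^2 - \|a-b\|^2)$ to surface the desired $\|f'(\w_t)\|^2$ term and isolate the residual $\|f'(\w_t) - f'(\hat\w_t)\|^2 \le L^2\|\embWeightIter{t} - \embWeightIter{\mathD(t)}\|^2$ via \eqref{eq:able-snipe}. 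For the second moment I would apply the variance bound \eqref{eq:credible-bird} to write $\expect{\xi_t}{\|g_t\|^2} \le \sigma^2 + \|f'(\hat\w_t)\|^2$, and note that the negative $\|f'(\hat\w_t)\|^2$ contribution produced by the inner product dominates this positive one as soon as $\gamma \le 1/L$, so both cancel.

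The heart of the argument is bounding the discrepancy $\expect{}{\|\embWeightIter{t} - \embWeightIter{\mathD(t)}\|^2}$. Unrolling the embedding recursion yields $\embWeightIter{t} - \embWeightIter{\mathD(t)} = -\gamma\sum_{j=\mathD(t)}^{t-1}\embGradientIter{j}$, a sum of at most $\tau$ stale sparse gradients by \eqref{eq:c21fc02e}. Instead of applying Cauchy--Schwarz coordinate-blind, which would cost a full factor of $\tau$, I would expand the squared norm coordinate by coordinate: a given embedding entry receives a nonzero contribution from iteration $j$ only when the sampled feature $\idInputIndexed{\xi_j}$ contains the corresponding ID, an event of probability at most $\alpha$. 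Taking expectations, the effective number of colliding stale updates at any coordinate is of order $\alpha\tau$, so the cross terms carry a factor $\alpha\tau$ rather than $\tau$; together with a bound $G$ on the gradient magnitude (treated as a constant in the statement) this gives $\expect{}{\|\embWeightIter{t} - \embWeightIter{\mathD(t)}\|^2} \lesssim \gamma^2\alpha\tau G^2$. This sparsity-driven collision count is the step I expect to be the main obstacle, since it requires separating the randomness of which coordinates are touched from the magnitudes of the gradients and correctly handling the dependence introduced by the random delay $\mathD(t)$.

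Finally I would assemble the per-step inequality, telescope over $t = 0,\dots,T-1$ using $\f{\w_0} - f^* \ge \f{\w_0} - \f{\w_T}$, and divide through by $\gamma T$. The three summands in the denominator of the prescribed rate $\gamma = 1/(L + \sqrt{TL}\sigma + 4\tau L\alpha)$ are engineered so that $(\f{\w_0}-f^*)/(\gamma T)$ splits into the three advertised pieces: the $L$ term yields the $1/T$ contribution, the $\sqrt{TL}\sigma$ term balances against the $\gamma L\sigma^2$ variance residual to give the $\sigma/\sqrt{T}$ contribution, and the $4\tau L\alpha$ term absorbs the staleness penalty derived above into the $\alpha\tau/T$ contribution, producing exactly \eqref{eq:2d1ebf2a}.
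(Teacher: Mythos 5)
Your overall skeleton---descent lemma on the joint iterate, polarization of the inner product against the stale gradient, cancelling the second moment $\sigma^2+\|f'(\hat{\w}_t)\|^2$ against the negative $\|f'(\hat{\w}_t)\|^2$ from the inner product once $\gamma\le 1/L$, unrolling the delay, telescoping, and balancing the three summands of $\gamma$---matches the paper's proof. The genuine gap is in \emph{where the factor $\alpha$ comes from}. You bound $\|f'(\w_t)-f'(\hat{\w}_t)\|^2\le L^2\|\embWeightIter{t}-\embWeightIter{\mathrm{D}(t)}\|^2$ with the full constant $L$ and then try to extract $\alpha$ from a coordinate-collision count on $\sum_{j=\mathrm{D}(t)}^{t-1}\embGradientIter{j}$. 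That count cannot deliver $\mathbb{E}\left[\|\embWeightIter{t}-\embWeightIter{\mathrm{D}(t)}\|^2\right]\lesssim\gamma^2\alpha\tau G^2$: the diagonal terms $\sum_j\|\embGradientIter{j}\|^2$ already contribute $\gamma^2\tau G^2$ with no $\alpha$ saving, because the sparsity of a single sample's gradient restricts its \emph{support}, not its norm, and the coordinate-wise Cauchy--Schwarz you sketch improves only the cross terms, yielding at best $\gamma^2\alpha\tau^2G^2$. Feeding the irreducible $\gamma^2\tau G^2$ into your final balance gives a staleness term of order $\tau/T$, i.e.\ the plain asynchronous-SGD rate, not the claimed $\alpha\tau/T$.

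The paper obtains $\alpha$ by a different mechanism, its Lemma~\ref{lemma:ajksdf}: since the per-sample Hessian with respect to $\embWeight$ is supported only on the coordinates actually looked up by the sample, $\mathbf{H}_{F_{\xi}}^{\text{emb}}\preccurlyeq LI_{\xi}$, and averaging over $\xi$ gives $\mathbf{H}_f^{\text{emb}}\preccurlyeq L\,\mathbb{E}_{\xi}I_{\xi}\preccurlyeq \alpha L$; hence the \emph{population} gradient $f'$ is $\alpha L$-Lipschitz with respect to embedding-only perturbations. Using $L_{\text{emb}}=\min\{1,\alpha\}L$ at the step where you used $L$ contributes a factor $\alpha^2$ to $\omega_t$, and this is what produces the $\alpha\tau/T$ term after setting $\gamma=1/(L+\sqrt{TL}\sigma+4\tau L\alpha)$; the delay sum itself is then handled with the crude factor $\tau$ on the variance part plus a recursion in $\omega_t$ that absorbs the gradient-norm part under $\gamma\tau L_{\text{emb}}\le 1/4$. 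This also removes the need for your uniform gradient bound: Assumption~\ref{assumption:main} only provides bounded variance, so the appeal to $\|\embGradientIter{j}\|\le G$ is an extra hypothesis the paper does not make. To repair your argument you would need to prove (and use) the $\alpha L$-Lipschitz property of $f'$ in the embedding block, or an equivalent averaged-sparsity statement, rather than counting collisions among the stale stochastic gradients.
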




The complete proof is provided in the preprint version of this paper~\cite{lian2021persia}.
Note that the first two terms in
\eqref{eq:2d1ebf2a} are exactly the convergence rate of vanilla SGD \citep{liu2020distributed}.
The third (additional) term is caused by the staleness of the asynchronous update in the embedding layer of the model. The staleness upper bound $\tau$ is typically proportional to the number of workers. The ID frequency upper bound $\alpha$ is a value smaller than $1$. Notice that $\alpha=1$ leads a convergence rate exactly matching the asynchronous SGD \citep{liu2020distributed}. It indicates that \sys (the hybrid algorithm) guarantees to be no worse than the asynchronous algorithm in convergence rate. When $\alpha \ll 1$, which is mostly always true in real recommendation systems, the hybrid algorithm---\sys---admits a very similar convergence rate to the synchronous algorithm, since the third term is dominated by the second term. Therefore, this theorem suggests that the hybrid algorithm follows the same convergence efficiency in terms of iterations.


\begin{figure*}[t!]
\centering
    \begin{subfigure}
        \centering\includegraphics[width=13.35em]{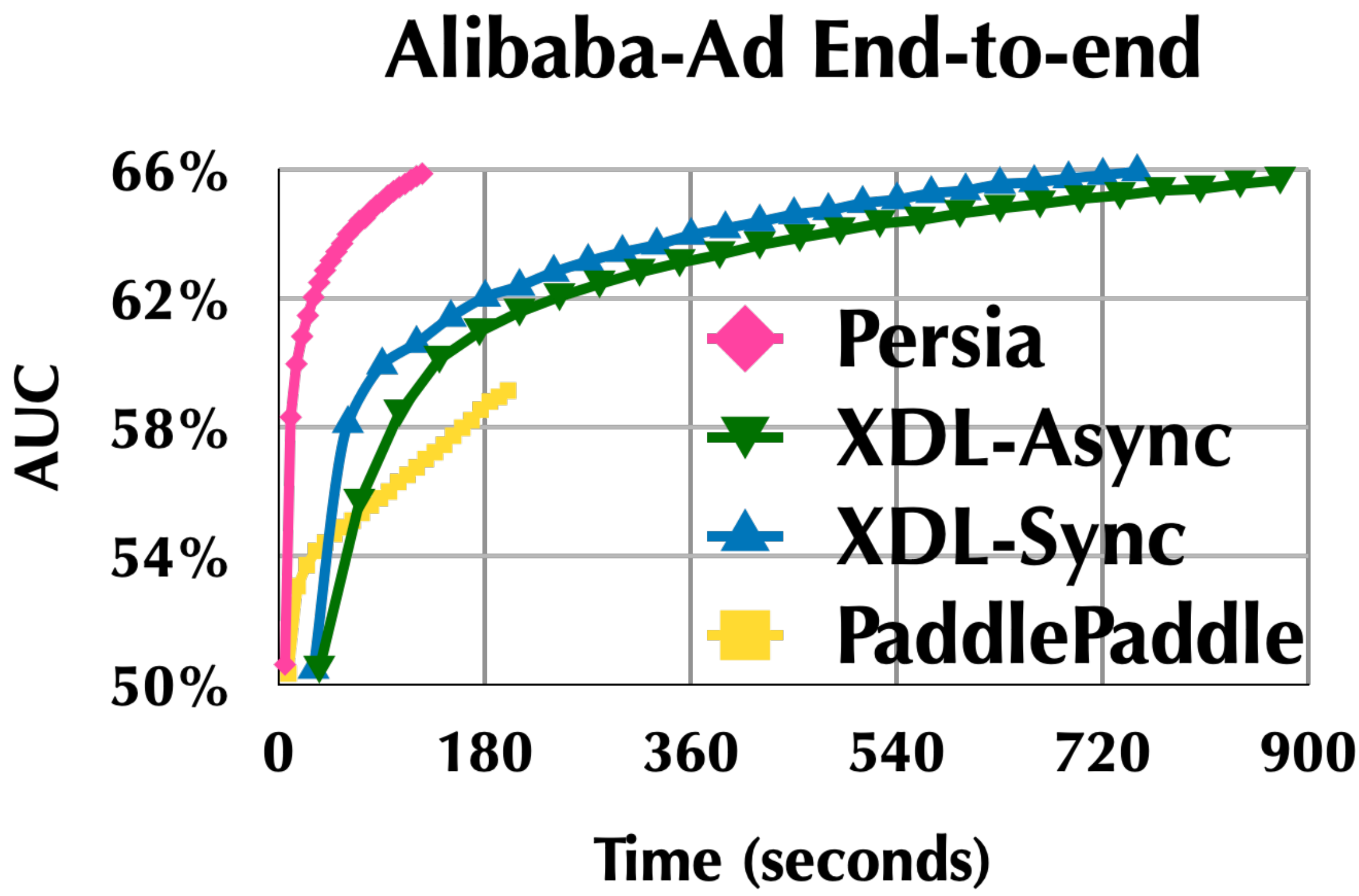}
    \end{subfigure}
    \begin{subfigure}
        \centering\includegraphics[width=13.35em]{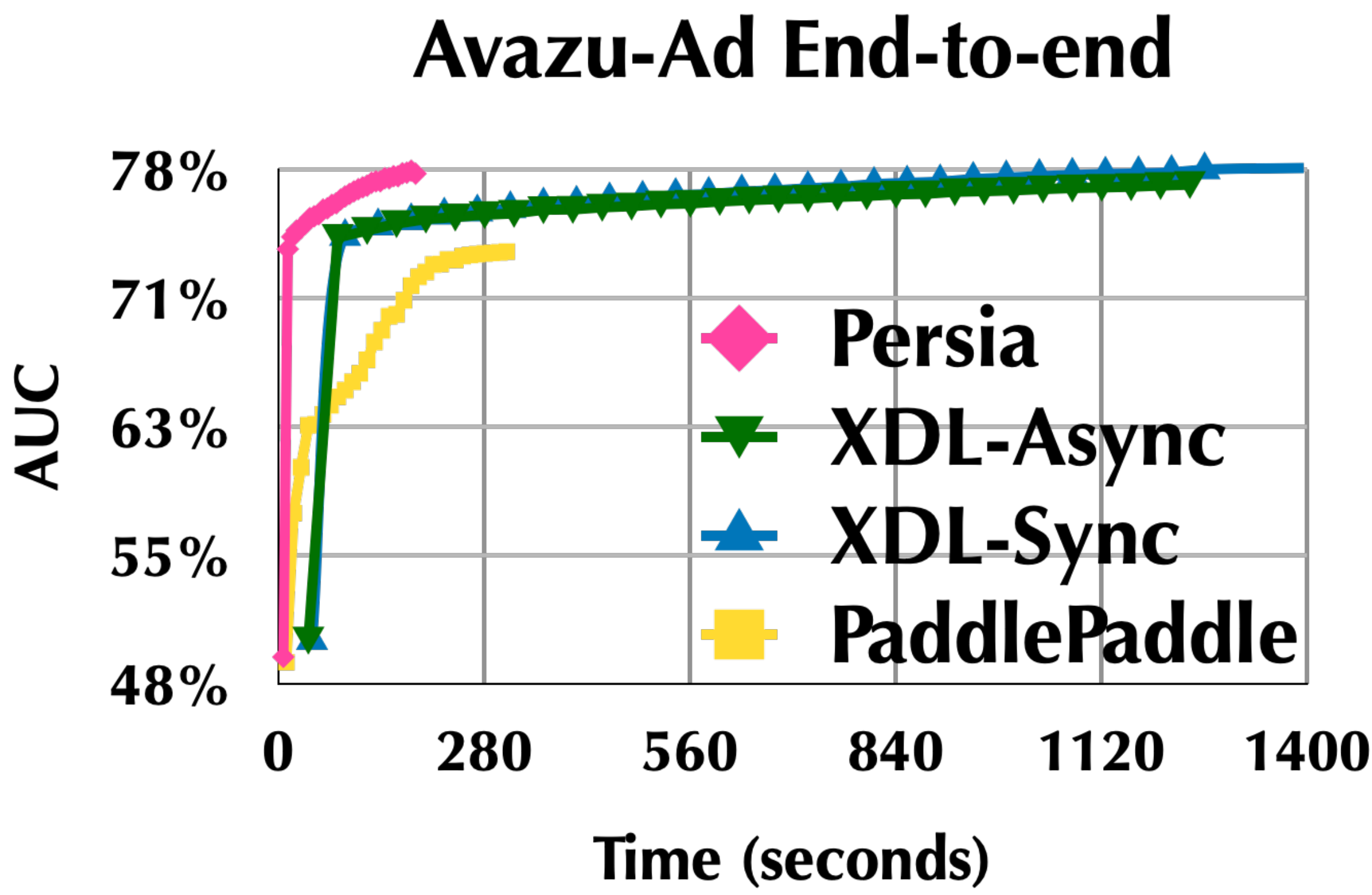}
    \end{subfigure}
    \begin{subfigure}
        \centering\includegraphics[width=13.35em]{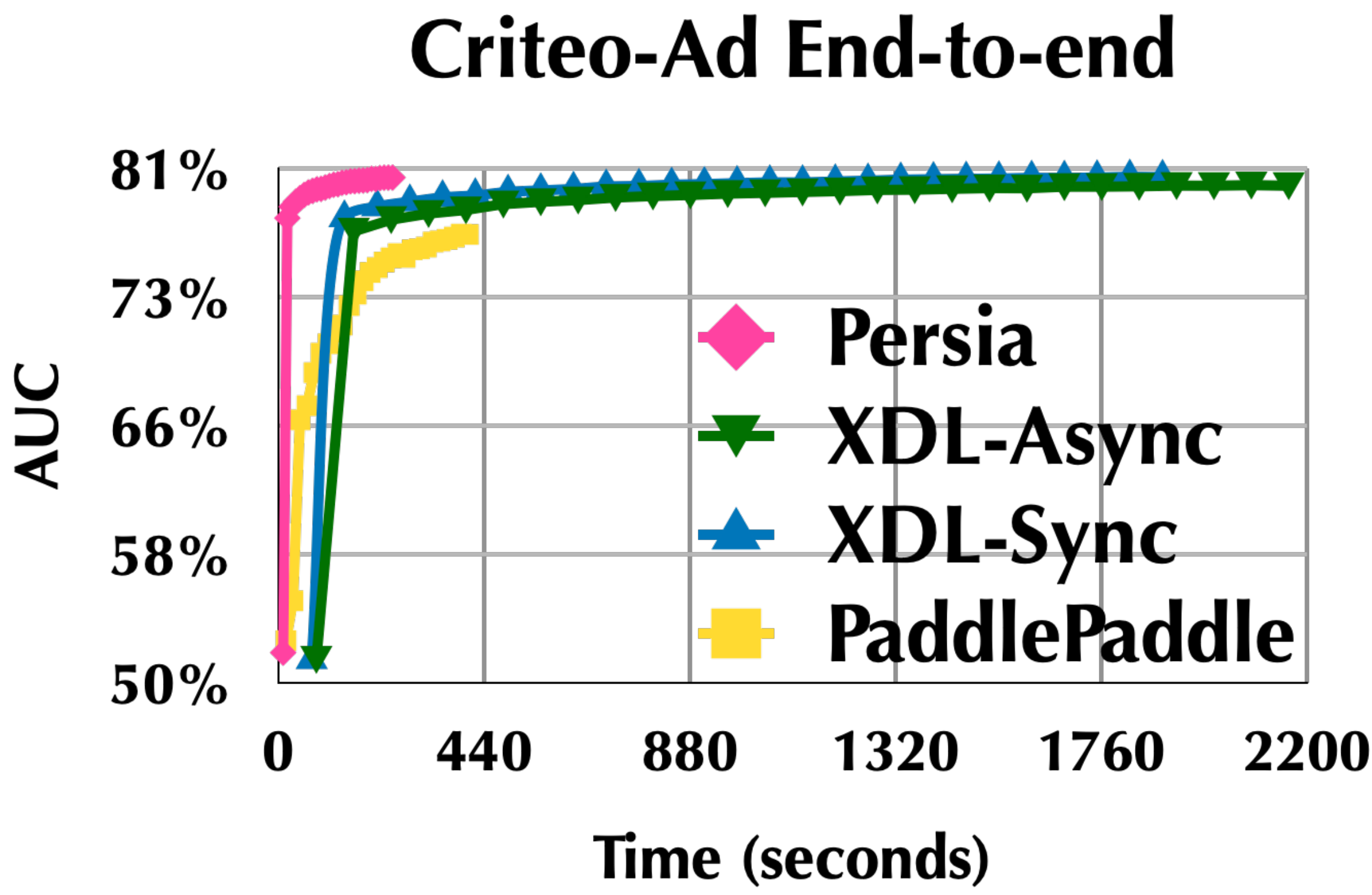}
    \end{subfigure}
    \begin{subfigure}
        \centering\includegraphics[width=13.35em]{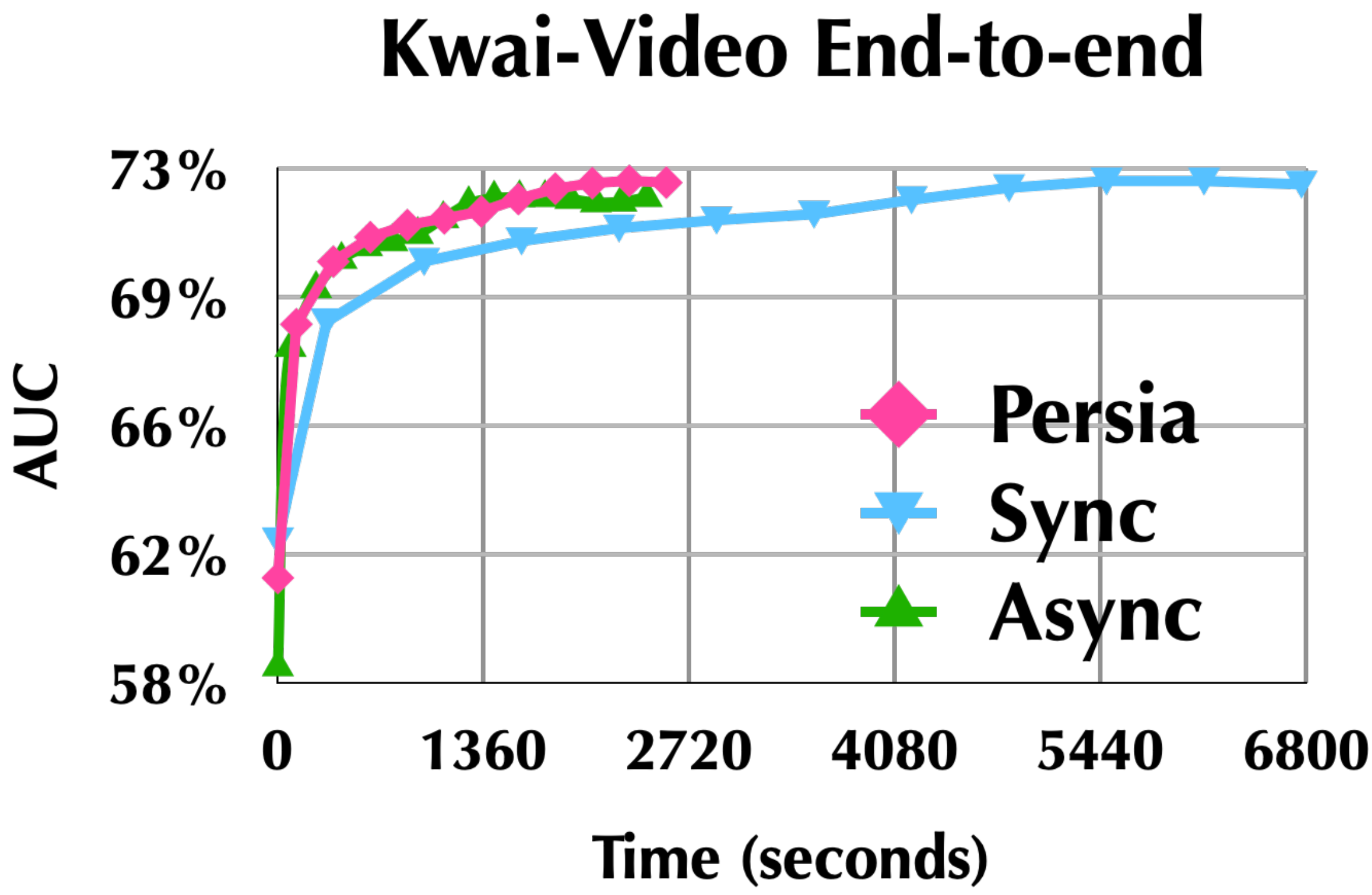}
    \end{subfigure}
    \vspace{-1.0em}
\caption{End-to-end training performance of four benchmarks: \texttt{Taobao-Ad}, \texttt{Avazu-Ad}, \texttt{Criteo-Ad}, \texttt{\kuaishou-Video} (from left to right). }
  \label{fig:exp_e2e}
\end{figure*}

\section{Experiments}
\label{sec:eval}

In this section, we introduce experiments to evaluate the design and implementation of \sys, focusing on the following questions:

\begin{itemize}[topsep=5pt, leftmargin=*]
    \item \textit{Can \sys provide significant boost of end-to-end training time comparing with the state-of-the-art systems? }
    \item \textit{Is the hybrid training algorithm statistically efficient in terms of convergence w.r.t training iterations?}
    \item \textit{Can \sys provide high training throughput and scale out well especially for trillion-parameter scale models?}
    
\end{itemize}

\textbf{Benchmark.} We evaluate \sys over three open-source benchmarks and one real-world production microvideo recommendation workflow at \kuaishou:

\begin{itemize}[topsep=5pt, leftmargin=*]
    \item \texttt{Taobao-Ad} (open source~\cite{taobaobenchmark}): predict the advertisement CTR from Taobao's website for 8 days with 26 million records.  
    
    \item \texttt{Avazu-Ad} (open source~\cite{avazubenchmark}): predict the advertisement CTR of Avazu's log for 11 days with 32 million records. 
    
    \item \texttt{Criteo-Ad} (open source~\cite{criteobenchmark}): predict the advertisement CTR of Criteo's traffic for 24 days with 44 million records. Note that we also extend this dataset (noted as \texttt{Criteo-Syn}) by synthesizing different number of random ID features for capacity and scalablity evaluation (see Section \ref{sec:exp_scal}). 
    
    \item \texttt{\kuaishou-Video} (confidential production dataset): predict the explicit behavior of \kuaishou's active users about the microvideo recommendation in $7$ days with 3 billion records.
\end{itemize}

For the three open source advertisement CTR benchmarks, we include $80\%$ of the records as training set and the rest $20\%$ of the records as test set, we consider a fully connected feed forward neural network (FFNN) as the deep learning model with five hidden layer dimensions of $4096$, $2048$, $1024$, $512$ and $256$. For the \kuaishou production microvideo recommendation task, $85\%$ of the data are included in the training set while the rest $15\%$ are considered as the test set, we also use FFNN as the model to predict multiple user behaviors. We report test AUC to evaluate convergence. The model scale of the each benchmark is listed in Table \ref{tab:scale}

\begin{table}[t!]
\centering
\begin{tabular}{|l|r|r|r}
\hline
               &Sparse \# parameter & Dense \# parameter  \\ 
\hline
\texttt{Taobao-Ad}      & 29 Million  & 12 Million  \\ 
\texttt{Avazu-Ad}       & 134 Million  & 12 Million  \\ 
\texttt{Criteo-Ad}       & 540 Million & 12 Million \\ 
\hline
\texttt{\kuaishou-Video} & 2 Trillion & 34 Million \\ 
\hline
\texttt{Criteo-Syn$_1$} &  6.25 Trillion & 12 Million     \\
\texttt{Criteo-Syn$_2$} &  12.5 Trillion & 12 Million     \\
\texttt{Criteo-Syn$_3$} &  25 Trillion & 12 Million     \\
\texttt{Criteo-Syn$_4$} &  50 Trillion & 12 Million     \\
\texttt{Criteo-Syn$_5$} &  \textbf{100 Trillion} & 12 Million     \\
\hline
\end{tabular}
\caption{Model scales for the benchmarks.}
\label{tab:scale}
\vspace{-2.5em}
\end{table}

\vspace{0.5em}
\textbf{Baseline systems.} We consider two state-of-the-art baselines: \textsc{XDL} \cite{jiang2019xdl} and \textsc{PaddlePaddle} \cite{paddle}---\textsc{XDL} is a specialized distributed recommendation framework developed by Alibaba; \textsc{PaddlePaddle} is a general purpose deep learning framework from Baidu with a special \texttt{Heter} mode that implements the design of AIBox \cite{zhao2019aibox}, Baidu's recommendation training system, according to private communications we had with 
members of the \textsc{PaddlePaddle} development community.

\vspace{0.5em}
\textbf{Cluster setup.}
We conduct experiments on two clusters.
Most of the training is conducted over heterogeneous clusters inside \kuaishou's production data center---for \sys, we include up to $64$ Nvidia V100 GPUs, and $100$ CPU instances (each with $52$ cores and $480$GB RAM).
The instances are connected by a network with the bandwidth of 100 Gbps. The baseline systems (\textsc{XDL} and \textsc{PaddlePaddle}) are equipped with the same amount of computation resources for each individual setting. 

We further conduct larger-scale scalability experiments over Google cloud platform,
for both capacity and \textit{public reproducibility} 
reasons\footnote{We appreciate Google's great support for coordinating the computation resources for this experiment.} with a heterogeneous cluster including:

\vspace{0.5em}
\begin{itemize}[topsep=0pt, leftmargin=*]
\item \underline{8 a2-highgpu-8g} instances (each with 8 Nvidia A100 GPUs) as NN workers;
\item \underline{100 c2-standard-30} instances (each with 30vCPUs, 120GB RAM) as embedding workers;
\item \underline{30 m2-ultramem-416} instances (each with 416vCPUs, 12TB RAM) as embedding PS.
\end{itemize}
\vspace{0.5em}

\vspace{-1.1em}
\subsection{End-to-end Evaluation}
We first compare the end-to-end training time that 
each system needs to achieve a given AUC,
using the three open source benchmarks over a heterogeneous cluster with $8$ GPU workers. We report the results of both synchronous and asynchronous modes of \textsc{XDL}, we conduct a careful trial of different modes in \textsc{PaddlePaddle} and report the result under the optimal setting. Figure \ref{fig:exp_e2e} illustrates significant performance improvements from \sys: e.g., for the \texttt{Taobao-Ad} benchmark, \sys is $7.12\times$ and $8.4\times$ faster than that of the synchronous and asynchronous modes of \textsc{XDL}, and $1.71\times$ faster than \textsc{PaddlePaddle}--same level of speedup also appears in the \texttt{Avazu-Ad} and \texttt{Criteo-Ad} benchmark. 

On our production \texttt{\kuaishou-Video} dataset, 
\textsc{XDL} fails to run the task to convergence
in $100$ hours---we compare the training sample throughput and find that \sys's throughput is $19.31\times$ of \textsc{XDL}). 
For \textsc{PaddlePaddle}, we fail to run the training task since \textsc{PaddlePaddle} does not support some deep learning operators (e.g., batch normalization) required in our model.

\vspace{-0.5em}
\subsection{Evaluation of Hybrid Algorithm}
\label{sec:exp_alg}

\begin{figure*}[t!]
\centering
    \begin{subfigure}
       \centering\includegraphics[width=13.35em]{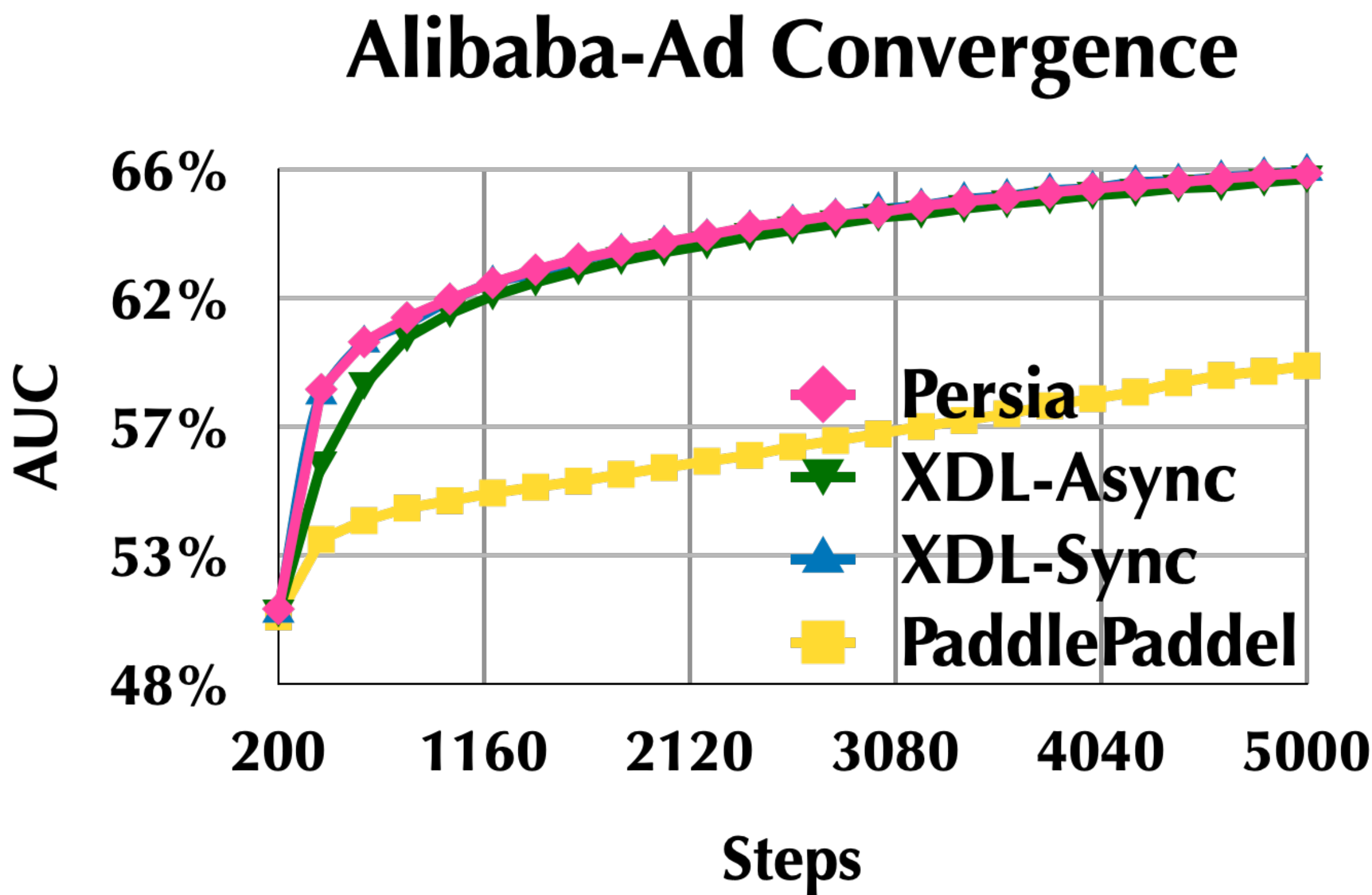}
    \end{subfigure}
    \begin{subfigure}
        \centering\includegraphics[width=13.35em]{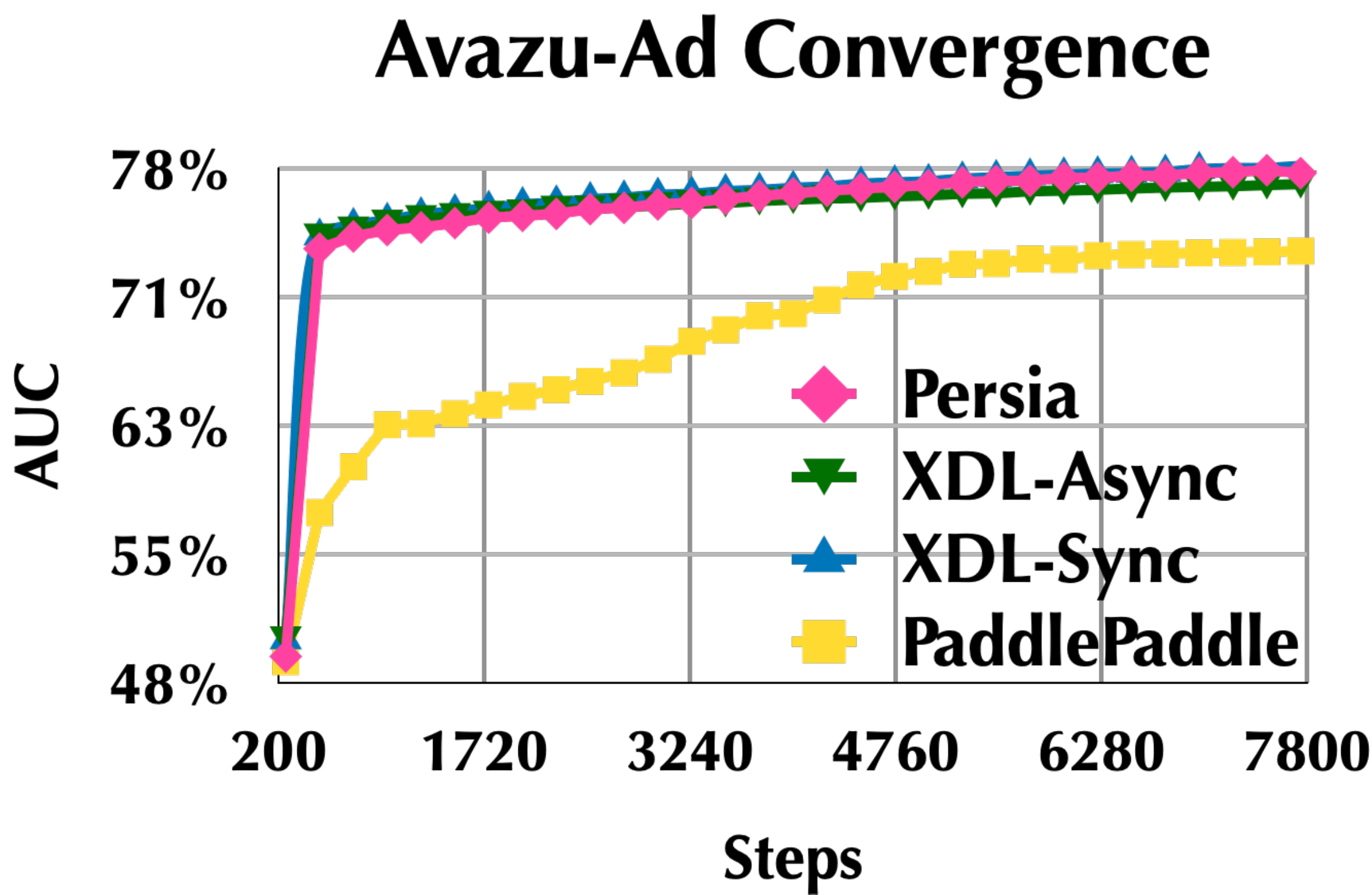}
    \end{subfigure}
    \begin{subfigure}
        \centering\includegraphics[width=13.35em]{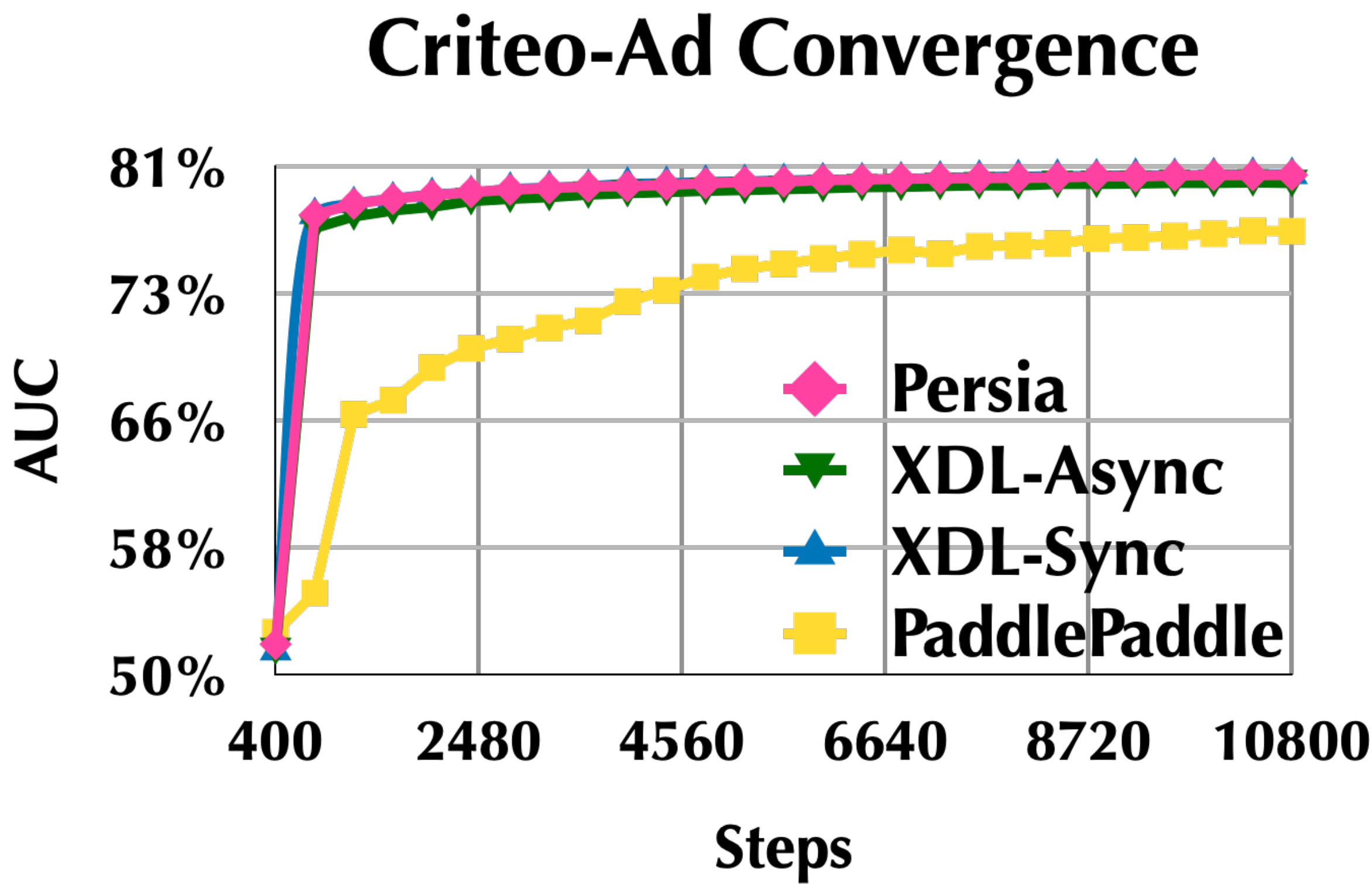}
    \end{subfigure}
    \begin{subfigure}
        \centering\includegraphics[width=13.35em]{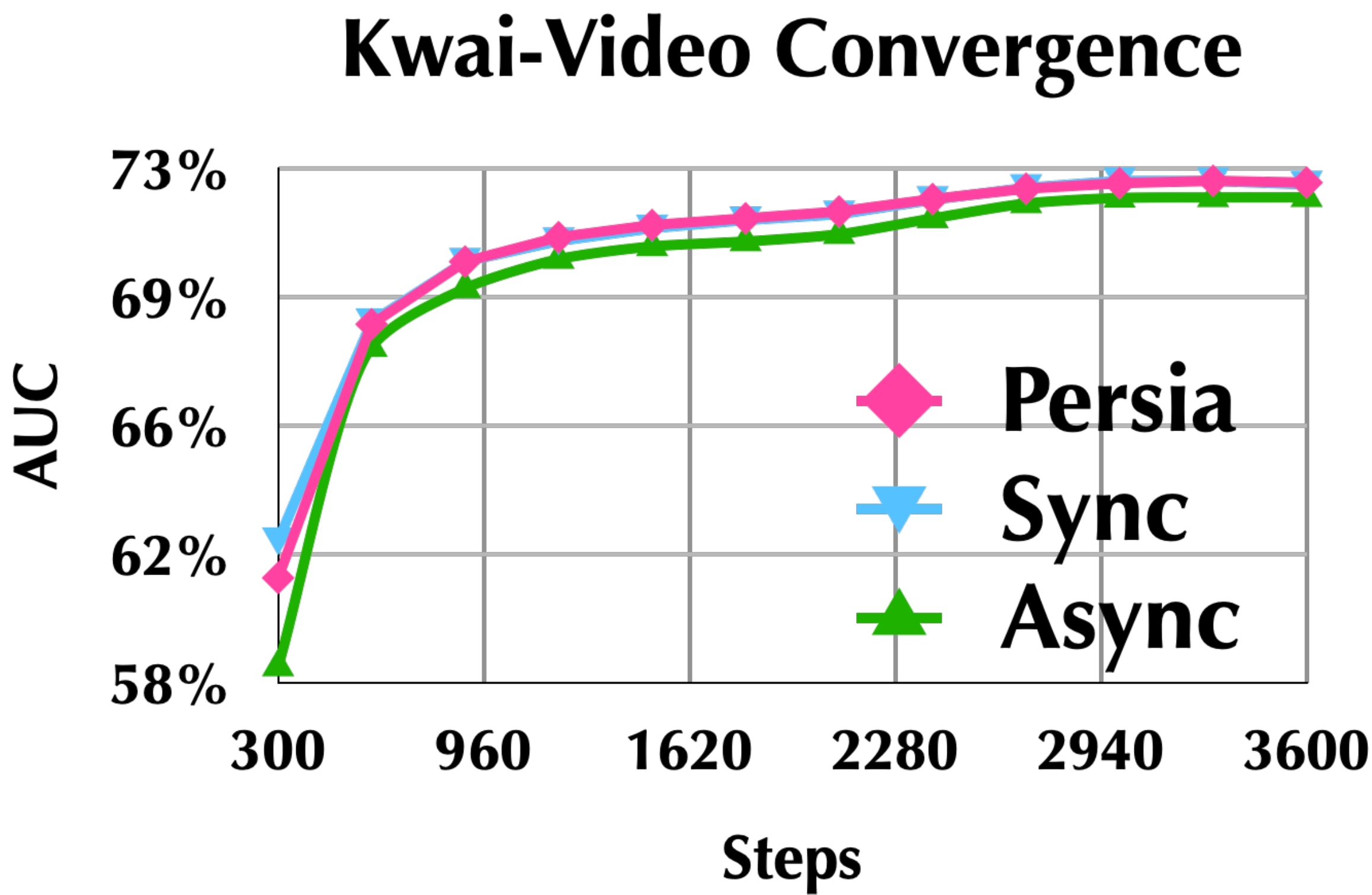}
    \end{subfigure}
    \vspace{-1em}
\caption{Convergence of four benchmarks: \texttt{Taobao-Ad}, \texttt{Avazu-Ad}, \texttt{Criteo-Ad}, \texttt{\kuaishou-Video}.} 
  \label{fig:exp_converge}
\end{figure*}

\begin{table*}[t!]
\centering
\begin{tabular}{l|c|c|c|c}
\hline
               & \sys-Hybrid  & Sync & Async & \textsc{PaddlePaddle} \\ \hline
\texttt{Taobao-Ad}      & 65.86\%  & 65.93\% (XDL)  & 65.66\% (XDL)   & 59.14\%   \\ 
\texttt{Avazu-Ad}       & 77.93\% & 78.02\%  (XDL) & 77.10\% (XDL)  & 73.19\%      \\
\texttt{Criteo-Ad}      & 80.47\% & 80.51\% (XDL) & 80.01\% (XDL)  & 77.08\%       \\ 
\texttt{\kuaishou-Video} & 72.63\% & 72.63\%    & 72.16\% & -      \\
\hline
\end{tabular}
\caption{Final test AUC of four benchmarks.}
\label{tab:auc}
\vspace{-2.0em}
\end{table*}

The hybrid algorithm is one of the main reasons behind \sys's scalability.
Here, we evaluate its convergence behavior and leave it scalability 
and performance to the next section. Figure~\ref{fig:exp_converge}
illustrates the convergence behaviors of different systems.
We see that the hybrid algorithm shows almost identical convergence when comparing with the fully synchronous mode, converging to 
comparable AUC as illustrated in Table~\ref{tab:auc}
(but as we will see in the next section, the hybrid algorithm is much faster compared with the fully synchronous algorithm).
We see that test AUC gap between the hybrid mode and synchronous mode is always less than $0.1\%$ in the three open-source benchmarks, and less than $0.001\%$ in the production \texttt{\kuaishou-video} benchmark; by contrast, the gap between the asynchronous mode and the synchronous mode is much higher (from $0.5\%$ to $1.0\%$); further, as we allow more aggressive asynchronicity in \textsc{PaddlePaddle}, the gap is more significant.
We want to emphasize that even $0.1\%$ decrease of accuracy for the production recommender model might lead to a loss of revenue at the scale of a million dollar---this is \textit{NOT acceptable}.

\begin{figure*}[t!]
\centering
    \begin{subfigure}
        \centering\includegraphics[width=13.35em]{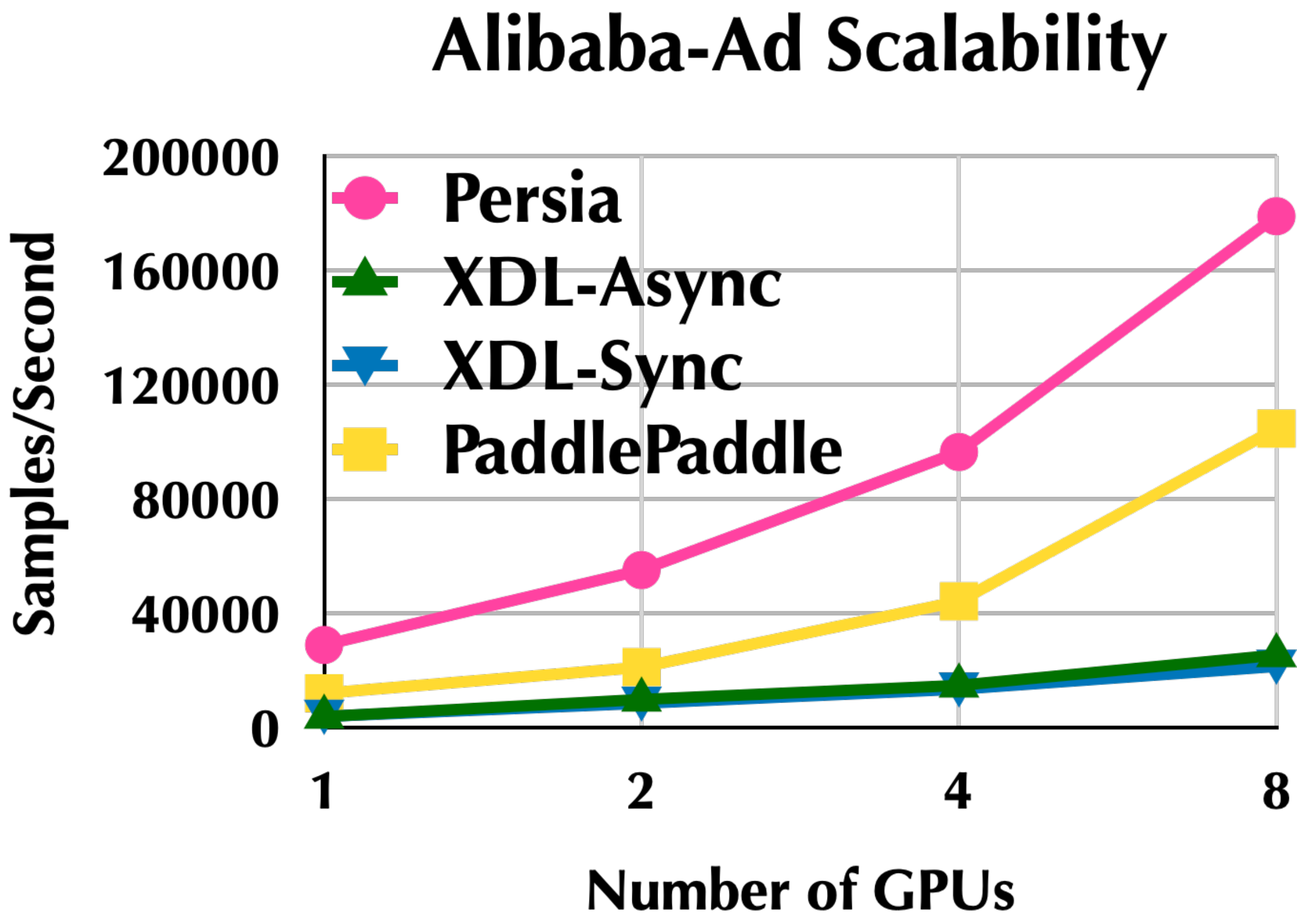}
    \end{subfigure}
    \begin{subfigure}
        \centering\includegraphics[width=13.35em]{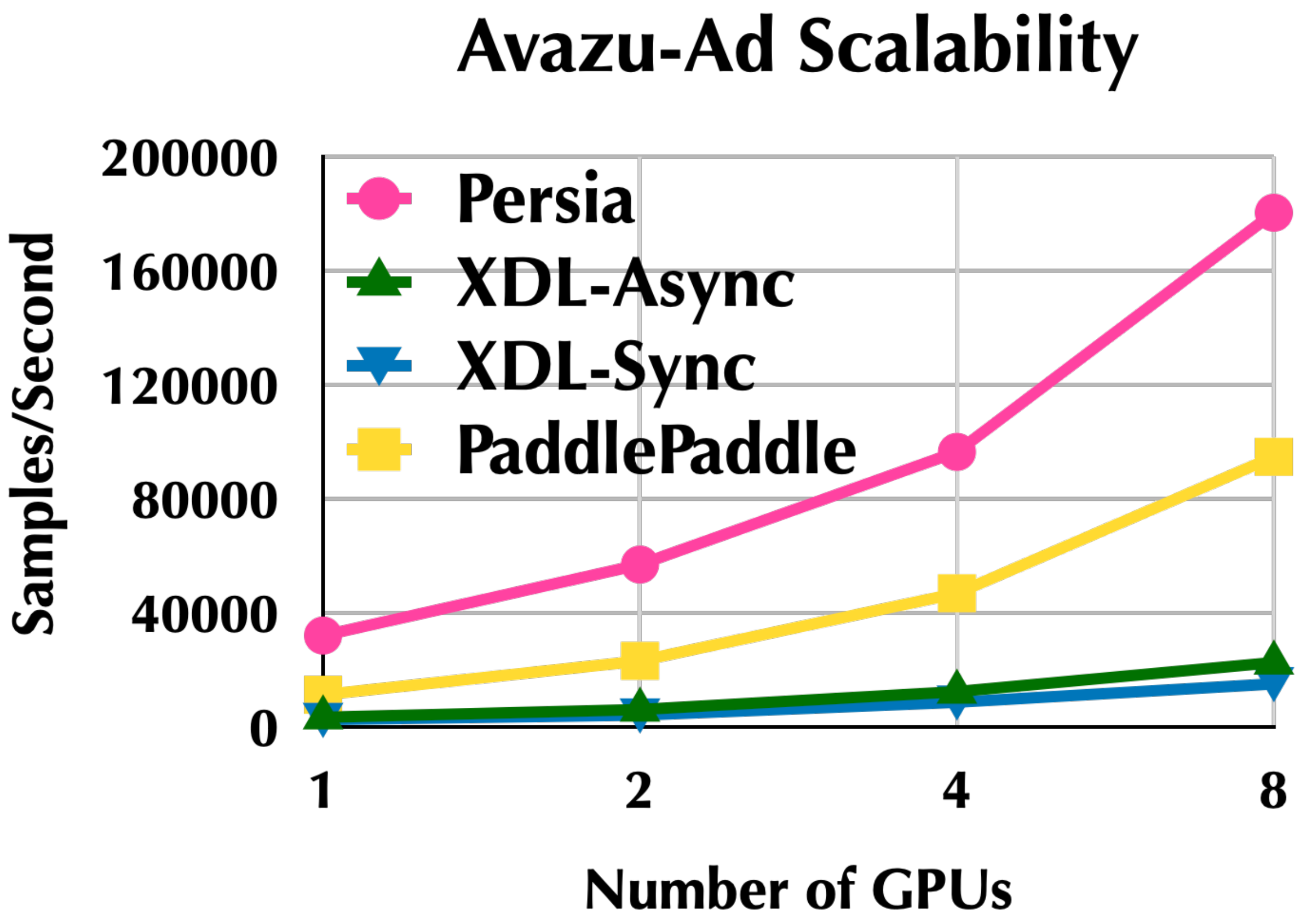}
    \end{subfigure}
    \begin{subfigure}
        \centering\includegraphics[width=13.35em]{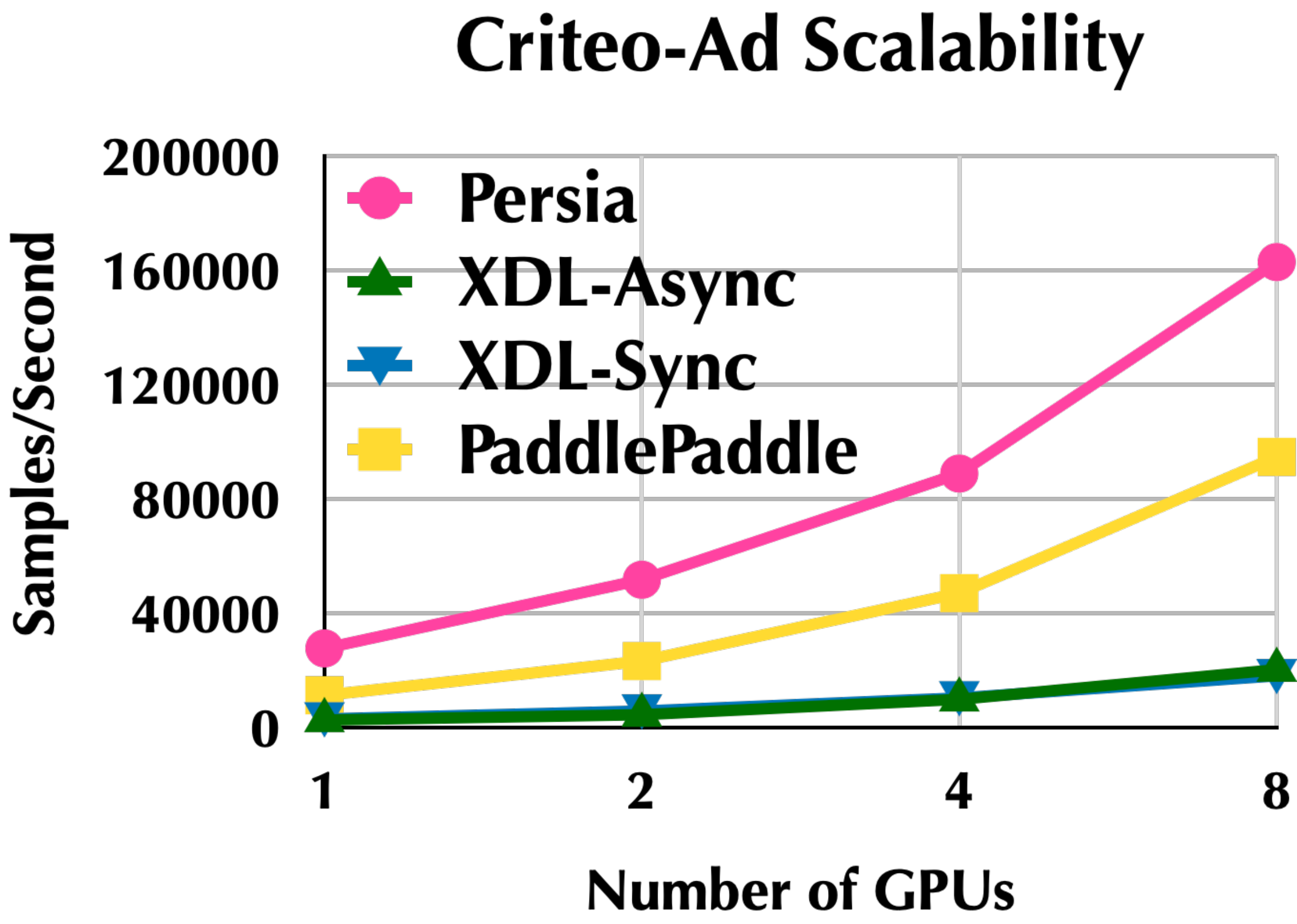}
    \end{subfigure}
    \begin{subfigure}
        \centering\includegraphics[width=13.35em]{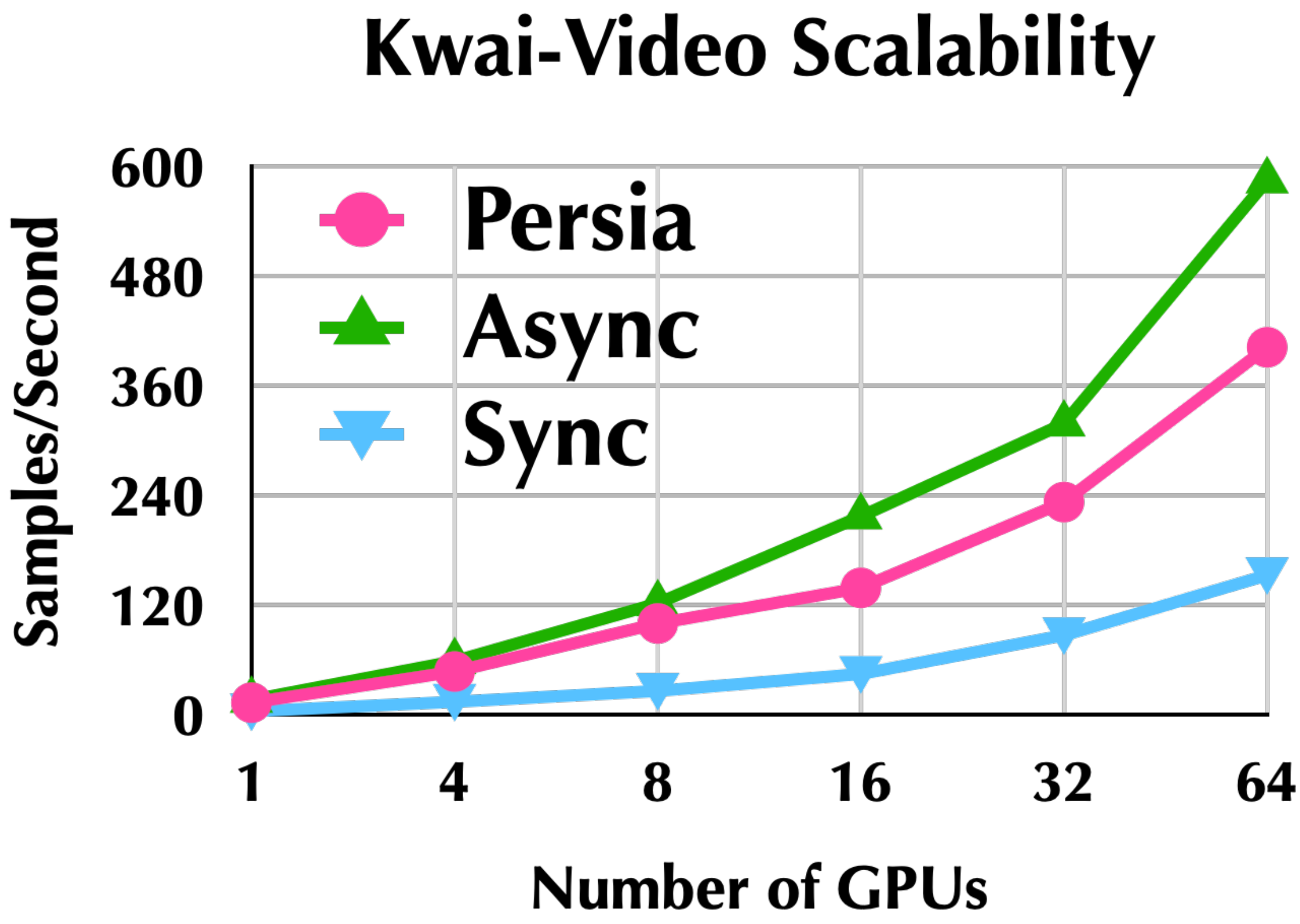}
    \end{subfigure}
    \vspace{-0.5em}
  \caption{Scalability and performance of four benchmarks: \texttt{Taobao-Ad}, \texttt{Avazu-Ad}, \texttt{Criteo-Ad}, \texttt{\kuaishou-Video} (from left to right). }
  \vspace{-0.5em}
  \label{fig:exp_scale}
\end{figure*}

\begin{figure}[t!]
    \centering\includegraphics[width=.47\textwidth]{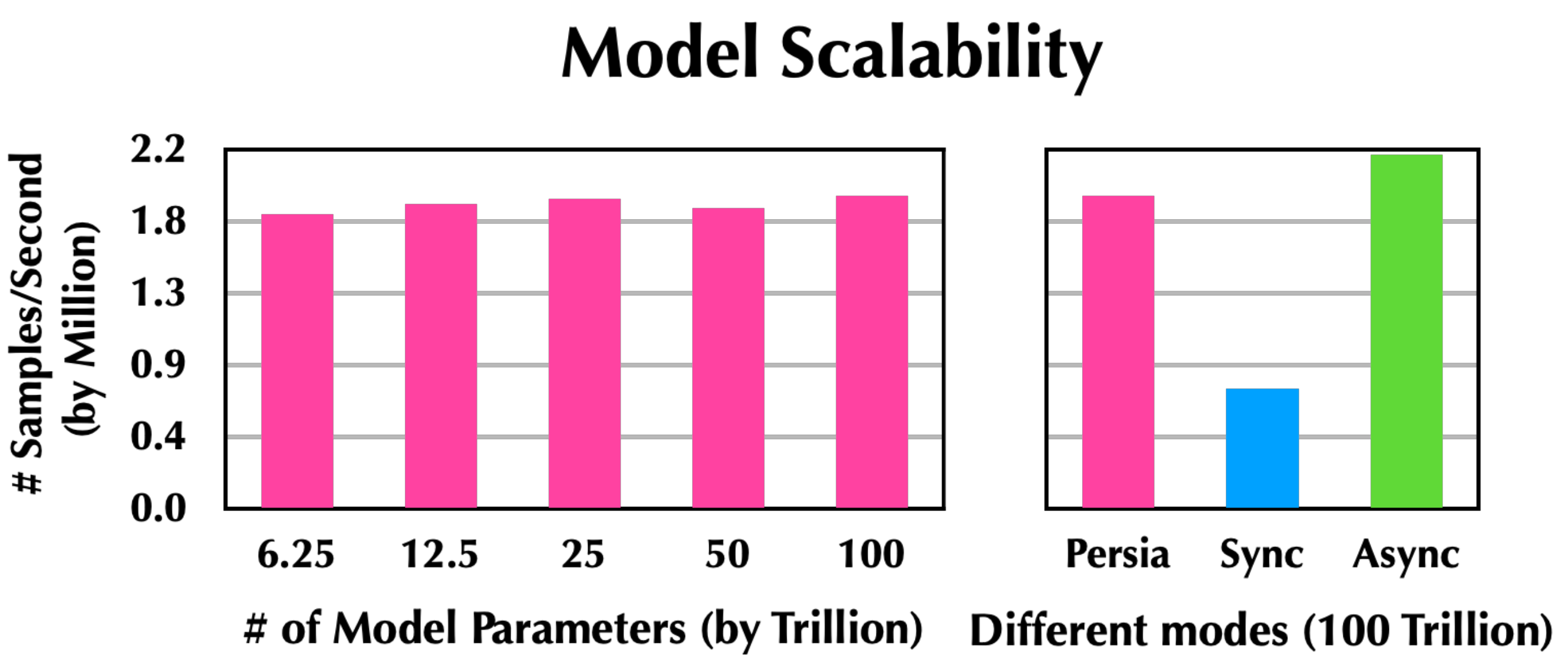}
    \vspace{-1.0em}
    \caption{Capacity test of \sys for \texttt{Criteo-Syn} benchmark the over Google cloud platform.}
    \label{fig:capacity}
    \vspace{-1.0em}
\end{figure}

\vspace{-0.5em}
\subsection{Scalability and Capacity}
\label{sec:exp_scal}

We first evaluate the scalability of \sys over clusters with \textit{different scales of GPUs} in terms of training sample throughput, as illustrated in Figure~\ref{fig:exp_scale}.
We see that \sys, with a hybrid algorithm, achieves much higher throughput compared to all other systems. 
On three open source benchmarks, \sys reaches nearly linear speedup with significantly higher throughput comparing with \textsc{XDL} and \textsc{PaddlePaddle}.
For the \texttt{\kuaishou-video} benchmark, \sys achieves $3.8\times$ higher throughput compared with the fully synchronous algorithm.
Note that the fully asynchronous algorithm can achieve faster throughput than the hybrid algorithm; however, as illustrated in the previous section, it incurs a lower AUC compared with the hybrid algorithm.

We further investigate the capacity of \sys for training \textit{different scales of trillion-parameter models} in Google cloud platform. Figure \ref{fig:capacity}-left illustrates that the throughput of \sys when varying the number of entrances in the embedding table and fixing the embedding output dimension to $128$---this is corresponding to the model scale of \texttt{Criteo-Syn} in Table \ref{tab:scale}. We see that \sys shows stable training throughput when increasing the model size even up to 100 trillion parameters. This can be viewed as concrete evidences that \sys has the capacity to scale out to the largest recommender model that is never reported before. Figure \ref{fig:capacity}-right shows that for the 100 trillion-parameter model, \sys also achieves $2.6\times$ higher throughput than the fully synchronous mode; on the other hand, asynchronous mode introduces further speedup ($1.2\times$ faster than that of the hybrid algorithm)---this generally would introduce statistical inefficiency for convergence, which is not revealed by the capacity evaluation.

\section{Related Work}
\label{sec:rel}

We briefly discuss the relevant recommender systems and system relaxations for distributed large scale learning. Detailed discussion can be found in more comprehensive surveys (e.g., \cite{zhang2019deep} for recommender system, and \cite{liu2020distributed} for distributed deep learning).

\subsection{Recommender Systems}
Recommender systems are critical tools to enhance user experience and promote sales and services \cite{zhang2019deep}, where deep learning approaches have shown great advance recently and been deployed ubiquitously by tech companies such as Alibaba~\cite{zhou2019deep}, Amazon~\cite{raman2019scaling}, Baidu~\cite{zhao2020distributed}, Facebook~\cite{naumov2020deep}, Google~\cite{covington2016deep, zhao2019recommending}, Kuaishou~\cite{xie2020kraken}, Pinterest~\cite{liu2017related}, Netflix~\cite{gomez2015netflix}, etc. Generally, deep learning based approaches take the sparse ID type features to generate dense vectors through an embedding layer, and then feed the embedding activations into the proceeding neural components to expose the correlation and interaction.
Diversified features are included to power recommender systems~\cite{zhang2017joint}, such as sequential information (e.g., session based user id~\cite{tan2016improved, twardowski2016modelling, tuan20173d}), text~\cite{zuo2016tag,zhang2017hashtag,zhao2018recommendations}, image~\cite{wang2017your,wen2018visual}, audio~\cite{van2013deep,wang2014improving}, video~\cite{chen2017attentive,lee2018collaborative}, social network~\cite{hsieh2016immersive,deng2016deep}, etc.
Additionally, various neural components have been explored for recommender systems, including multi-layer perceptron (MLP)~\cite{xu2016tag,yang2017bridging}, convolutional neural network (CNN)~\cite{seo2017interpretable,tuan20173d}, recurrent neural network (RNN)~\cite{wu2016recurrent,wu2016personal}, Autoencoder~\cite{sedhain2015autorec,zhuang2017representation}, and deep reinforcement learning~\cite{zhao2018recommendations,zheng2018drn}. Furthermore, multiple relevant goals for recommendation can be learned simultaneously by multi-task learning~\cite{covington2016deep,zhou2018deep,zhou2019deep,zhang2020model,naumov2020deep,zhao2020autoemb}.

Due to the massive scale of both training data and recommender models, distributed learning techniques are applied to train industrial-scale recommender systems~\cite{zhou2019deep,raman2019scaling,zhao2020distributed,zhao2019recommending,naumov2020deep}. The heterogeneity in computation naturally leads to the idea of utilizing heterogeneous distributed computation resources to handle such training procedure~\cite{jiang2019xdl,raman2019scaling,zhao2019aibox,pan2020dissecting,zhao2020distributed,naumov2020deep,krishna2020accelerating,wilkening2021recssd,scaling2022het}.
For example, XDL \cite{jiang2019xdl} adopts the advanced model server to replace classic parameter server to reduce the traffic between CPU nodes and GPU nodes; 
Baidu proposes a hierarchical PS architecture~\cite{zhao2020distributed} to implement a sophisticated caching schema to reduce the communication overhead;
HET~\cite{scaling2022het} adopts an advanced cache mechanism at the GPU worker side to leverage the skewness of embeddings by caching frequently updated entrances in the worker's limited local memory.  
However, as far as we know, there is a lack of a carefully designed distributed training solution to support the heterogeneity inherited from the training data (sparse vs. dense) and gradient decent based optimization (memory bounded vs. computation bounded) over mixed hardwares and infrastructures (CPU instances vs. GPU instances).

\subsection{Distributed Deep Learning}
Distributed deep learning can be categorized as data parallelism and model parallelism. 
In \textit{data parallel} distributed deep learning, two main categories of systems are designed---parameter server~\cite{dean2012large, li2014scaling,FlexpsVLDB,HeteroSIGMOD,PS2} and AllReduce~ \cite{sergeev2018horovod,zhang2019mllib,jiang2018dimboost,chen2016xgboost}.
In a PS architecture, models are stored in a single node or distributively in multiple nodes; during the training phase, workers periodically fetch the model from PS, conduct forward/backward propagation, and push the gradients to the PS, while the PS aggregates the gradients and updates the parameters.
With an AllReduce paradigm, all workers collaborate with their neighbors for model/gradient exchanges.
Different strategies are proposed to speed up the expensive parameter/gradient exchange phase. To reduce communication volumes, lossy communication compression methods are introduced, such as quantization~\cite{alistarh2016qsgd,zhang2017zipml,bernstein2018signsgd,wen2017terngrad},
sparsification~\cite{wangni2018gradient,alistarh2018convergence,wang2018atomo,wang2017efficient}, sketching~\cite{ivkin2019communication}, and
error compensation~\cite{tang2019doublesqueeze}).
In an attempt to get rid of the latency bottleneck, decentralized communication approaches are proposed~\cite{koloskova2019decentralized,li2018pipe,lian2017can,lian2018asynchronous,tang2018d}. Additionally, local SGD is discussed to optimize for the number of communication rounds during training~\cite{wang2019adaptive,lin2019don,stich2018local,haddadpour2019local}.
To remove the synchronization barrier, asynchronous update methods are proposed~\cite{lian2015asynchronous,peng2017asynchronous,zheng2017asynchronous,zhou2018distributed,simsekli2018asynchronous,nguyen2018sgd}. There are also approaches that combines multiple strategies listed above~\cite{lian2018asynchronous,basu2019qsparse,koloskova2019decentralized,tang2019deepsqueeze,beznosikov2020biased}.
On the other hand, researches about \textit{model parallelism} attempt to study how to allocate model parameters and training computation across compute units in a cluster to maximize training throughput and minimize communication overheads. Optimizations are proposed for both operation partitioning approach~\cite{jia2019beyond,wang2019supporting,shazeer2018mesh,shoeybi2019megatron} and pipeline parallel approach~\cite{huang2019gpipe,harlap2018pipedream,tarnawski2020efficient}.
Recently, there are also approaches that combine both data and model parallelism~\cite{raman2019scaling, park2020hetpipe, li2021terapipe}.

General purpose deep learning systems have been one of the main driving forces behind the rapid advancement of machine learning techniques with the increasing scalability and performance 
for distributed deep learning. Popular options include TensorFlow~\cite{abadi2016tensorflow}, PyTorch~\cite{pytorch}, MXNet~\cite{chen2015mxnet}, PaddlePaddle~\cite{paddle}, MindSpore~\cite{mindspore}, etc.
Extensions and modifications have been made based on these general purpose learning systems for efficient distributed learning (e.g., Horovod~\cite{sergeev2018horovod}, BytePS~\cite{jiang2020unified}, Bagua~\cite{gan2021bagua}, Megatron~\cite{shoeybi2019megatron}, ZeRO~\cite{rajbhandari2020zero}, SageMaker~\cite{karakus2021amazon}, etc.). However, even including these extensions, the current general purpose deep learning systems do not consider the challenges about handling the heterogeneity over a hybrid infrastructure. Thus, it is difficult to directly use these general purpose learning systems for the training tasks of industrial-scale recommender systems over a heterogeneous cluster.

\section{Conclusion}
\label{sec:con}
We introduce \sys, a distributed system to support efficient and scalable training of recommender models at the scale of 100 trillion parameters. 
We archive the statistical and hardware efficiency by a careful co-design of both the distributed training algorithm and the distributed training system. 
Our proposed hybrid distributed training algorithm introduces elaborate system relaxations for efficient utilization of heterogeneous clusters, while converging as fast as vanilla SGD. 
We implement a wide range of system optimizations to overcome the unique challenges raised by the hybrid algorithmic design. 
We evaluate \sys using both publicly available benchmark tasks and production tasks at \kuaishou. 
We show that \sys leads to up to $7.12\times$ speedup compared to alternative state-of-the-art approaches; additionally, \sys can also scale out to 100-trillion-parameter model on Google cloud platform.

\sys has been released as an open source project on github with concrete instructions about setup over Google cloud platform---we hope everyone from both academia and industry would find it easy to train 100-trillion-parameter scale deep learning recommender models. 



\balance
\bibliographystyle{ACM-Reference-Format}
\bibliography{persia}

\appendix

\newpage
\onecolumn
\section*{Appendix}

\begin{lemma}
  \label{lemma:ajksdf}
  Under Assumption \ref{assumption:main}, for any $\delta \in \mathbb{R}^{\embDimension}$ the following inequality holds:
\begin{align*}
   \left\| f' \left(\embWeight + \delta, \nnWeight\right) - f' \left(\embWeight, \nnWeight\right) \right\|
  \leqslant \alpha L \left\| \delta \right\|,\quad \forall \embWeight\in \mathbb{R}^{\embDimension}, \nnWeight \in \mathbb{R}^{\nnDimension}. \numberthis \label{eq:crack-skylark-1}
\end{align*}
\end{lemma}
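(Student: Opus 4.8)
The plan is to read $f'(\embWeight,\nnWeight)$ here as the embedding block of the gradient, $\nabla_{\embWeight}\f{\embWeight,\nnWeight}=\expect{\xi}{\nabla_{\embWeight}\F{\embWeight,\nnWeight;\xi}}$ (this is the only reading under which the sharper constant $\alpha L$, rather than $L$, can hold, since the dense block $\nnWeight$ is active for every sample). The decisive structural fact is the block sparsity of the per-sample embedding gradient. For a sample $\xi$, let $S_\xi$ be the set of ID blocks activated by $\idInputIndexed{\xi}$; then $\F{\cdot;\xi}$ touches the embedding table only through the entries in $S_\xi$, so the per-sample gradient $G^\xi(\embWeight):=\nabla_{\embWeight}\F{\embWeight,\nnWeight;\xi}$ is supported on $S_\xi$ and depends on $\embWeight$ only through its restriction to $S_\xi$. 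Hence $G^\xi(\embWeight+\delta)-G^\xi(\embWeight)=G^\xi(\embWeight+\delta_{S_\xi})-G^\xi(\embWeight)$, where $\delta_{S_\xi}$ zeroes out every block outside $S_\xi$, and by \eqref{eq:able-snipe} applied to the increment $(\delta_{S_\xi},0)$ this difference has norm at most $L\|\delta_{S_\xi}\|$.

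First I would pass to the averaged gradient block by block. The $j$-th block of $f'(\embWeight+\delta,\nnWeight)-f'(\embWeight,\nnWeight)$ equals $\expect{\xi}{\mathbf{1}[j\in S_\xi]\,(G^\xi(\embWeight+\delta)-G^\xi(\embWeight))_j}$, since the $j$-th block of $G^\xi$ vanishes whenever $j\notin S_\xi$. Applying the triangle inequality and then Cauchy--Schwarz, pairing the indicator against the norm of the block, produces a factor $\sqrt{\Pr[j\in S_\xi]}\le\sqrt{\alpha}$. Squaring and summing over all blocks $j$ collapses the per-block second moments into $\alpha\,\expect{\xi}{\|G^\xi(\embWeight+\delta)-G^\xi(\embWeight)\|^2}$.

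Next I would substitute the per-sample Lipschitz bound $\|G^\xi(\embWeight+\delta)-G^\xi(\embWeight)\|\le L\|\delta_{S_\xi}\|$ and evaluate the remaining expectation. Since $\|\delta_{S_\xi}\|^2=\sum_j\mathbf{1}[j\in S_\xi]\,\|\delta_j\|^2$, taking expectations gives $\expect{\xi}{\|\delta_{S_\xi}\|^2}=\sum_j\Pr[j\in S_\xi]\,\|\delta_j\|^2\le\alpha\|\delta\|^2$, which extracts a second factor of $\alpha$. Combining the two estimates yields $\|f'(\embWeight+\delta,\nnWeight)-f'(\embWeight,\nnWeight)\|^2\le\alpha^2L^2\|\delta\|^2$, and taking square roots gives the claimed bound $\alpha L\|\delta\|$.

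The crux, and the step I expect to be the main obstacle, is obtaining the factor $\alpha$ rather than $\sqrt{\alpha}$. A direct triangle-inequality estimate, $\|f'(\embWeight+\delta,\nnWeight)-f'(\embWeight,\nnWeight)\|\le\expect{\xi}{L\|\delta_{S_\xi}\|}\le\sqrt{\alpha}\,L\|\delta\|$, already follows from sparsity alone but loses a square root. Recovering the full $\alpha$ requires \emph{two} independent appeals to the frequency bound $\Pr[j\in S_\xi]\le\alpha$: one exploiting the sparsity of the gradient's \emph{output} (which blocks are active, via Cauchy--Schwarz on the averaged block), and one exploiting the sparsity of the effective \emph{input} perturbation $\delta_{S_\xi}$. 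Care is needed to keep the per-sample Lipschitz increment tied to $\|\delta_{S_\xi}\|$ rather than $\|\delta\|$ throughout, as relaxing to $\|\delta\|$ at any point would again degrade the bound to $\sqrt{\alpha}\,L$.
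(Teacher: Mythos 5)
Your argument is correct and delivers the stated constant $\alpha L$, but it is a genuinely different route from the paper's. The paper works at second order: it writes down the Hessian of $F(\cdot;\xi)$ restricted to the embedding block, observes that because the sample only touches the entries in $\Omega_\xi$ this Hessian satisfies $\mathbf{H}_{F_\xi}^{\text{emb}} \preccurlyeq L I_{\xi}$ with $I_\xi$ the diagonal indicator of the active entries, and then takes expectations to get $\mathbf{H}_f^{\text{emb}} = \mathbb{E}_\xi \mathbf{H}_{F_\xi}^{\text{emb}} \preccurlyeq L\, \mathbb{E}_\xi I_\xi \preccurlyeq \alpha L I$, from which the Lipschitz bound is read off. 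In that picture the frequency bound is invoked only once, because the quadratic form $v^{\top}\mathbf{H}_{F_\xi}^{\text{emb}} v$ sees the indicator on both sides of the bilinear form simultaneously --- this is exactly the ``two appearances of sparsity'' that you had to account for by hand, once on the output blocks via Cauchy--Schwarz against $\mathbf{1}[j\in S_\xi]$ and once on the input perturbation via $\mathbb{E}_\xi\|\delta_{S_\xi}\|^2 \leqslant \alpha\|\delta\|^2$, each contributing a factor $\sqrt{\alpha}$ at the level of norms. What each approach buys: the paper's is shorter once the Hessian formalism is set up, but it tacitly requires second derivatives of $F(\cdot;\xi)$ to exist everywhere, which is slightly more than the literal ``$F'$ is $L$-Lipschitz'' hypothesis; your first-order argument works under the assumptions exactly as stated, at the cost of the more delicate bookkeeping you correctly identify as the crux (any premature relaxation of $\|\delta_{S_\xi}\|$ to $\|\delta\|$, or a plain triangle inequality over $\xi$, degrades the constant to $\sqrt{\alpha}L$). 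Your decision to read $f'$ as the embedding block of the gradient also matches what the paper's proof actually establishes, since its Hessian computation is likewise confined to the $\embWeight$--$\embWeight$ block.
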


\begin{proof}
Define Hessian matrix w.r.t. the sparse weights $\mathbf{w}^{\text{emb}}$:
\begin{align*}
  \mathbf{H}_{F_{\xi}}^{\text{emb}} \assign & \left(\begin{array}{cccc}
    \frac{\partial^2 F \left( \mathbf{w}^{\text{emb}},
    \mathbf{w}^{\text{nn}} ; \xi \right)}{\partial \left[
    \mathbf{w}^{\text{emb}}_1 \right]^2} & \frac{\partial^2 F \left(
    \mathbf{w}^{\text{emb}}, \mathbf{w}^{\text{nn}} ; \xi
    \right)}{\partial \mathbf{w}^{\text{emb}}_1 \partial
    \mathbf{w}^{\text{emb}}_2} & \cdots & \frac{\partial^2 F \left(
    \mathbf{w}^{\text{emb}}, \mathbf{w}^{\text{nn}} ; \xi
    \right)}{\partial \mathbf{w}^{\text{emb}}_1 \partial
    \mathbf{w}^{\text{emb}}_{N^{\text{emb}}}}\\
    \frac{\partial^2 F \left( \mathbf{w}^{\text{emb}},
    \mathbf{w}^{\text{nn}} ; \xi \right)}{\partial
    \mathbf{w}^{\text{emb}}_2 \partial \mathbf{w}^{\text{emb}}_1} &
    \frac{\partial^2 F \left( \mathbf{w}^{\text{emb}},
    \mathbf{w}^{\text{nn}} ; \xi \right)}{\partial \left[
    \mathbf{w}^{\text{emb}}_2 \right]^2} & \cdots & \frac{\partial^2 F
    \left( \mathbf{w}^{\text{emb}}, \mathbf{w}^{\text{nn}} ; \xi
    \right)}{\partial \mathbf{w}^{\text{emb}}_2 \partial
    \mathbf{w}^{\text{emb}}_{N^{\text{emb}}}}\\
    \vdots & \vdots & \ddots & \vdots\\
    \frac{\partial^2 F \left( \mathbf{w}^{\text{emb}},
    \mathbf{w}^{\text{nn}} ; \xi \right)}{\partial
    \mathbf{w}^{\text{emb}}_n \partial \mathbf{w}^{\text{emb}}_1} &
    \frac{\partial^2 F \left( \mathbf{w}^{\text{emb}},
    \mathbf{w}^{\text{nn}} ; \xi \right)}{\partial
    \mathbf{w}^{\text{emb}}_n \partial \mathbf{w}^{\text{emb}}_2} &
    \cdots & \frac{\partial^2 F \left( \mathbf{w}^{\text{emb}},
    \mathbf{w}^{\text{nn}} ; \xi \right)}{\partial \left[
    \mathbf{w}^{\text{emb}}_n \right]^2}
  \end{array}\right),\\
  \mathbf{H}_f^{\text{emb}} \assign & \left(\begin{array}{cccc}
    \frac{\partial^2 f \left( \mathbf{w}^{\text{emb}},
    \mathbf{w}^{\text{nn}} \right)}{\partial \left[
    \mathbf{w}^{\text{emb}}_1 \right]^2} & \frac{\partial^2 f \left(
    \mathbf{w}^{\text{emb}}, \mathbf{w}^{\text{nn}}
    \right)}{\partial \mathbf{w}^{\text{emb}}_1 \partial
    \mathbf{w}^{\text{emb}}_2} & \cdots & \frac{\partial^2 f \left(
    \mathbf{w}^{\text{emb}}, \mathbf{w}^{\text{nn}}
    \right)}{\partial \mathbf{w}^{\text{emb}}_1 \partial
    \mathbf{w}^{\text{emb}}_{N^{\text{emb}}}}\\
    \frac{\partial^2 f \left( \mathbf{w}^{\text{emb}},
    \mathbf{w}^{\text{nn}} \right)}{\partial
    \mathbf{w}^{\text{emb}}_2 \partial \mathbf{w}^{\text{emb}}_1} &
    \frac{\partial^2 f \left( \mathbf{w}^{\text{emb}},
    \mathbf{w}^{\text{nn}} \right)}{\partial \left[
    \mathbf{w}^{\text{emb}}_2 \right]^2} & \cdots & \frac{\partial^2 f
    \left( \mathbf{w}^{\text{emb}}, \mathbf{w}^{\text{nn}}
    \right)}{\partial \mathbf{w}^{\text{emb}}_2 \partial
    \mathbf{w}^{\text{emb}}_{N^{\text{emb}}}}\\
    \vdots & \vdots & \ddots & \vdots\\
    \frac{\partial^2 f \left( \mathbf{w}^{\text{emb}},
    \mathbf{w}^{\text{nn}} \right)}{\partial
    \mathbf{w}^{\text{emb}}_n \partial \mathbf{w}^{\text{emb}}_1} &
    \frac{\partial^2 f \left( \mathbf{w}^{\text{emb}},
    \mathbf{w}^{\text{nn}} \right)}{\partial
    \mathbf{w}^{\text{emb}}_n \partial \mathbf{w}^{\text{emb}}_2} &
    \cdots & \frac{\partial^2 f \left( \mathbf{w}^{\text{emb}},
    \mathbf{w}^{\text{nn}} \right)}{\partial \left[
    \mathbf{w}^{\text{emb}}_n \right]^2}
  \end{array}\right) .
\end{align*}

From the Lipschitzian assumption \eqref{eq:able-snipe} and the stochastic
gradient function $F'(\cdot; \xi)$ is differentiable, we obtain the Hessian matrix of $F(\cdot;\xi)$
is $\preccurlyeq L I$, where $I$ is the identity matrix. Noting that since for
each sample, the $\mathbf{w}^{\text{emb}}$ is used by a lookup operation
\eqref{eq:activation-set-def} where only a subset of the weights get used. For
any element $\mathbf{w}^{\text{emb}}_i$, if $i\not\in \Omega_{\xi}$,
$\mathbf{w}^{\text{emb}}_i$ will have no effect on the value of
$F'(\cdot; \xi)$. This implies the corresponding elements are zero in
$F(\cdot;\xi)$'s Hessian matrix, and the following inequality holds:
\begin{align*}
  \mathbf{H}_{F_{\xi}}^{\text{emb}} \preccurlyeq & LI_{\xi},
\end{align*}
where
\begin{align*}
  (I_{\xi})_{i, j} = & \left\{\begin{array}{ll}
    1, & \text{if } i = j \text{ and } i \in \Omega_{\xi},\\
    0, & \text{otherwise.}
  \end{array}\right. .
\end{align*}
Note that from the definition of $f$ in \eqref{eq:deciding-dory}, we obtain the Hessian of $f$ is the expectation of the Hessian of $F$:
\begin{align*}
  \mathbf{H}_f^{\text{emb}} = & \mathbb{E}_{\xi}
  \mathbf{H}_{F_{\xi}}^{\text{emb}} \preccurlyeq  L\mathbb{E}_{\xi} I_{\xi} .
\end{align*}
Finally following the definition of $\alpha$, we obtain $\mathbf{H}_f^{\text{emb}} \preccurlyeq L\mathbb{E}_{\xi} I_{\xi} \preccurlyeq L \alpha$, which leads to the $\alpha L$ Lipschitzian continuity in the sparse weights \eqref{eq:crack-skylark-1}.
\end{proof}

\textbf{Proof to Theorem \ref{thm:9853187e}}

\begin{proof}
  Since $f$ is defined as the average of $F$, from Lipschitzian gradient assumption \eqref{eq:able-snipe},
  it can be derived that $f$ is also Lipschitzian:
  \begin{align}
    \label{eq:able-snipe-1}
    \left\| f' \left(\mathbf{w} \right) - f' \left(\mathbf{w} + \mathbf{\delta}\right) \right\| \leqslant L\left\| \mathbf{\delta} \right\|,\quad \forall \mathbf{w}, \mathbf{\delta} \in \mathbb{R}^{\embDimension + \nnDimension}.
  \end{align}

  From the algorithm update rule \eqref{eq:59f6c90b} and the Lipschitzian property \eqref{eq:able-snipe-1}, we obtain

\begin{align*}
  & f \left(\embWeightIter{t+1}, \nnWeightIter{t+1}\right) - f \left(\embWeightIter{t}, \nnWeightIter{t}\right)\\
  \overset{\eqref{eq:able-snipe-1}}{\leqslant} & \left \langle f' \left(\embWeightIter{t}, \nnWeightIter{t}\right), \embWeightIter{t+1} + \nnWeightIter{t+1} -
  \embWeightIter{t} - \nnWeightIter{t} \right \rangle + \frac{L}{2} \left\| \embWeightIter{t+1} + \nnWeightIter{t+1} - \embWeightIter{t} - \nnWeightIter{t} \right\|^2\\
  \overset{\eqref{eq:59f6c90b}}{=} & - \gamma \left\langle f' \left(\embWeightIter{t}, \nnWeightIter{t}\right),  \embGradientIter{t}  + \nnGradientIter{t} \right\rangle + \frac{L \gamma^2}{2} \left\| \embGradientIter{t} + \nnGradientIter{t} \right\|^2 \\
  = & - \gamma \left \langle f' \left(\embWeightIter{t}, \nnWeightIter{t}\right), F_t'\right \rangle + \frac{L \gamma^2}{2} \left\| F_t' \right\|^2 .
\end{align*}

Taking the expectation on both sides, we obtain
\begin{align*}
  \mathbb{E} \left[f \left(\embWeightIter{t+1}, \nnWeightIter{t+1}\right) - f \left(\embWeightIter{t}, \nnWeightIter{t}\right)\right]
  \leqslant - \gamma \underbrace{\mathbb{E} \left[\left \langle f' \left(\embWeightIter{t}, \nnWeightIter{t}\right),
  F_t' \right \rangle\right]}_{=:T_{1}} + \frac{L \gamma^2}{2} \underbrace{\mathbb{E} \left[ \left\| F_t' \right\|^2\right]}_{=:T_{2}} .\numberthis \label{eq:cosmic-beetle}
\end{align*}

We show the bound for the terms $T_{1}$ and $T_{2}$ respectively, in the following
paragraphs. For $T_{2}$, from the unbiased property of stochastic gradients \eqref{eq:deciding-dory}
and the bounded variance assumption
\eqref{eq:credible-bird} we obtain
\begin{align*}
  T_{2} = & \mathbb{E} \left[\left\| F_t' \right\|^2\right]\\
   = & \mathbb{E} \left[ \left\| F_t' - f' \left(\embWeightIter{\mathD (t)}, \nnWeightIter{t}\right) + f' \left(\embWeightIter{\mathD (t)}, \nnWeightIter{t}\right) \right\|^2\right]\\
  = & \mathbb{E} \left[ \left\| F_t' - f' \left(\embWeightIter{\mathD (t)}, \nnWeightIter{t}\right) \right\|^2
  + \left\| f' \left(\embWeightIter{\mathD (t)}, \nnWeightIter{t}\right) \right\|^2 \right]
  + 2 \underbrace{\mathbb{E} \left[ \left\langle F_t' - f' \left(\embWeightIter{\mathD (t)}, \nnWeightIter{t}\right), f'\left(\embWeightIter{\mathD (t)}, \nnWeightIter{t}\right)
  \right\rangle\right]}_{= 0, \text{ from \eqref{eq:deciding-dory}}}\\
  \overset{\eqref{eq:credible-bird}}{\leqslant} & \sigma^2 +\mathbb{E} \left[ \left\| f' \left(\embWeightIter{\mathD (t)}, \nnWeightIter{t}\right) \right\|^2\right]. \label{eq:trusted-goose}\numberthis
\end{align*}

For $T_{1}$, we have
\begin{align*}
  T_{1} = & \mathbb{E} \left[ \left\langle f' \left(\embWeightIter{t}, \nnWeightIter{t}\right), F_t'\right\rangle \right]\\
  \overset{\eqref{eq:deciding-dory}}{=} & \mathbb{E} \left[\left \langle f' (\embWeightIter{t}, \nnWeightIter{t}), f' \left(\embWeightIter{\mathD (t)}, \nnWeightIter{t}\right) \right \rangle \right]\\
  = & \frac{1}{2} \mathbb{E} \left[ \left\| f' \left(\embWeightIter{t}, \nnWeightIter{t}\right) \right\|^2 + \left\| f'
  \left(\embWeightIter{\mathD (t)}, \nnWeightIter{t}\right) \right\|^2 - \left\| f' \left(\embWeightIter{t}, \nnWeightIter{t}\right) - f'\left
  (\embWeightIter{\mathD (t)}, \nnWeightIter{t}\right) \right\|^2 \right] ,\label{eq:smashing-jackal}\numberthis
\end{align*}
where the last step comes from the fact that for any $\mathbf{a},\mathbf{b}$ the following equality holds:
$\left \langle \mathbf{a}, \mathbf{b}\right \rangle = \frac{1}{2}\left\|\mathbf{a} - \mathbf{b} \right\|^{2} - \frac{1}{2} \left\|\mathbf{a}\right\|^{2} - \frac{1}{2}\left\|\mathbf{b}\right\|^{2}$.

Using the bounds for $T_{1}$ \eqref{eq:smashing-jackal} and $T_{2}$ \eqref{eq:trusted-goose} in \eqref{eq:cosmic-beetle}, we obtain
\begin{align*}
  & \mathbb{E} \left[f \left(\embWeightIter{t+1}, \nnWeightIter{t+1}\right) - f\left(\embWeightIter{t}, \nnWeightIter{t}\right)\right]\\
  \leqslant & - \frac{\gamma}{2} \mathbb{E} \left[ \left\| f' \left(\embWeightIter{t}, \nnWeightIter{t}\right)
  \right\|^2 + \left\| f' \left(\embWeightIter{\mathD (t)}, \nnWeightIter{t}\right) \right\|^2 - \left\| f' \left(\embWeightIter{t},
  \nnWeightIter{t}\right) - f' \left(\embWeightIter{\mathD (t)}, \nnWeightIter{t}]\right) \right\|^2 \right]\\
  & + \frac{L \gamma^2}{2} \mathbb{E}\left[ \left\| f' \left(\embWeightIter{\mathD (t)}, \nnWeightIter{t}\right)
  \right\|^2\right] + \frac{L \gamma^2}{2} \sigma^2\\
  = & - \frac{\gamma}{2} \mathbb{E} \left[ \left\| f' \left(\embWeightIter{t}, \nnWeightIter{t}\right) \right\|^2\right] -
  \frac{\gamma}{2} \left(1 - \gamma L\right) \mathbb{E} \left[ \left\| f' \left(\embWeightIter{\mathD (t)}, \nnWeightIter{t}\right) \right\|^2 \right]
   + \frac{L \gamma^2}{2} \sigma^2 + \frac{\gamma}{2} \mathbb{E} \left[ \left\| f'
  \left(\embWeightIter{t}, \nnWeightIter{t}\right) - f' \left(\embWeightIter{\mathD (t)}, \nnWeightIter{t}\right) \right\|^2 \right]\\
  \leqslant & - \frac{\gamma}{2} \mathbb{E}\left[ \left\| f' \left(\embWeightIter{t}, \nnWeightIter{t}\right) \right\|^2 \right] +
  \frac{L \gamma^2}{2} \sigma^2 + \frac{\gamma}{2} \underbrace{\mathbb{E} \left[\left\| f' \left(\embWeightIter{t},\nnWeightIter{t}\right) - f' \left(\embWeightIter{\mathD (t)}, \nnWeightIter{t}\right) \right\|^2\right]}_{=:\omega_{t}},\numberthis \label{eq:fancy-guinea}
\end{align*}
where the last inequality is obtained by choosing a sufficiently small learning rate $\gamma$ satisfying $1 \geqslant \gamma L$.

We then bound $\omega_{t}$ in order to get the final convergence rate result. 
Denoting
\begin{align*}%
  L_{\text{emb}} := & \min\left\{1, \alpha\right\}L,\\
  \embGradientIter{t}(t) := & \nabla_{\embWeight}f\left(\embWeightIter{\mathD (t)}, \embWeightIter{t}\right),\\ 
	\nnGradientIter{t}(t) := & \nabla_{\nnWeight}f\left(\embWeightIter{\mathD (t)}, \embWeightIter{t}\right),%
\end{align*}
it follows from Lemma \ref{lemma:ajksdf} that
\begin{align*}
  \omega_t = & \mathbb{E} \left[\left\| f' \left(\embWeightIter{t}, \nnWeightIter{t}\right) - f' \left(\embWeightIter{\mathD (t)}, \nnWeightIter{t}\right) \right\|^2\right]\\
  \overset{\eqref{eq:crack-skylark-1}}{\leqslant} & L^2_{\text{emb}} \mathbb{E} \left[ \left\| \embWeightIter{t} - \embWeightIter{\mathD (t)} \right\|^2\right]\\
  \overset{\eqref{eq:59f6c90b}}{=} & \gamma^2 L^2_{\text{emb}} \mathbb{E} \left[ \left\| \sum^{t - 1}_{k = \mathD (t)} \embGradientIter{k}\right\|^2 \right]\\
  = & \gamma^2 L^2_{\text{emb}} \mathbb{E} \left[ \left\| \sum^{t - 1}_{k = \mathD (t)} \embGradientIter{k}- \sum^{t - 1}_{k = \mathD (t)}\embGradientIter{k} + \sum^{t - 1}_{k =
  \mathD (t)}\embGradientIter{k} \right\|^2 \right]\\
  = & \gamma^2 L^2_{\text{emb}} \mathbb{E} \left[ \left\| \sum^{t - 1}_{k = \mathD (t)}
  \embGradientIter{k}- \sum^{t - 1}_{k = \mathD (t)}\embGradientIter{k}
  \right\|^2\right]
  + \gamma^2 L^2_{\text{emb}} \mathbb{E} \left[ \left\| \sum^{t - 1}_{k = \mathD (t)}
 \embGradientIter{k} \right\|^2 \right]\\
  \overset{\eqref{eq:credible-bird}}{\leqslant} & \gamma^2 L^2_{\text{emb}}  \tau \sigma^2 + \gamma^2 L^2_{\text{emb}}
  \mathbb{E} \left[ \left\| \sum^{t - 1}_{k = \mathD (t)}\embGradientIter{k} \right\|^2 \right]\\
  \leqslant & \gamma^2 \tau L^2_{\text{emb}}  \tau \sigma^2 + \gamma^2 L^2_{\text{emb}}
  \mathbb{E} \left[  \sum^{t - 1}_{k = \mathD (t)} \left\|
 \embGradientIter{k} \right\|^2  \right],\numberthis \label{eq:awake-opossum}
\end{align*}
where the third last step and the last step come from the fact that for any
$a_{1}, a_{2}, \ldots a_n$, we have the following inequality:
$\left\|a_{1} + a_{2} + \cdots + a_{n}\right\|^{2} \leqslant n \sum_{i=1}^{n}\left\|a_{i}\right\|^{2}$.
We then bound the right side of \eqref{eq:awake-opossum} with $\omega_t$ itself:
\begin{align*}
  \omega_t \leqslant & \gamma^2 L^2_{\tau} \tau \sigma^2 + \gamma^2 \tau L^2_{\tau} \mathbb{E} \left[ \sum^{t - 1}_{k = \mathD (t)} \left\|\embGradientIter{k} \right\|^2 \right]\\
  = & \gamma^2 L^2_{\tau} \tau \sigma^2 + \gamma^2 \tau L^2_{\tau} \mathbb{E}
  \left[ \sum^{t - 1}_{k = \mathD (t)} \left\|\embGradientIter{k}
    - \nabla_{\embWeight}{f} \left(\embWeightIter{k},
    \nnWeightIter{k} \right) + \nabla_{\embWeight}{f}
  \left(\embWeightIter{k}, \nnWeightIter{k}\right) \right\|^2
  \right]\\
  \leqslant & \gamma^2 L^2_{\tau} \tau \sigma^2 + 2 \gamma^2 \tau L^2_{\tau}
  \mathbb{E} \left[ \sum^{t - 1}_{k = \mathD (t)} \left\|\embGradientIter{k} - \nabla_{\embWeight}f
  \left(\embWeightIter{k}, \nnWeightIter{k}\right) \right\|^2 +
  \sum^{t - 1}_{k = \mathD (t)} \left\| \nabla_{\embWeight}f
  \left(\embWeightIter{k}, \nnWeightIter{k}\right) \right\|^2
  \right]\\
  \leqslant & \gamma^2 L^2_{\tau} \tau \sigma^2 + 2 \gamma^2 \tau L^2_{\tau}
  \mathbb{E} \left[ \sum^{t - 1}_{k = \mathD (t)} \omega_k + \sum^{t - 1}_{k = \mathD (t)} \left\| f' \left(\mathbf{w}_k\right) \right\|^2 \right]\\
  \leqslant & \gamma^2 L^2_{\tau} \tau \sigma^2 + 2 \gamma^2 \tau L^2_{\tau}
  \mathbb{E} \left[ \sum^{t - 1}_{k = t - \tau} \omega_k + \sum^{t - 1}_{k = t - \tau} \left\| f' \left(\mathbf{w}_k\right) \right\|^2 \right] .\numberthis \label{eq:up-squid}
\end{align*}

Summing \eqref{eq:up-squid} from $t=0$ to $t = T-1$ we obtain
\begin{align*}
  \sum_{t = 0}^{T - 1} \omega_t \leqslant & \gamma^2 L^2_{\tau} \tau \sigma^2
  T + 2 \gamma^2 \tau^2 L^2_{\tau} \mathbb{E} \left[ \sum^{T - 1}_{t = 0}
  \omega_k + \sum^{T - 1}_{t = 0} \left\| f' \left(\mathbf{w}_t\right) \right\|^2 \right] .
\end{align*}

Rearranging the terms, we obtain
\begin{align*}
  \left(1 - 2 \gamma^2 \tau^2 L^2_{\text{emb}}\right) \sum_{t = 0}^{T - 1} \omega_t \leqslant &
  \gamma^2 L^2_{\text{emb}} \tau \sigma^2 T + 2 \gamma^2 \tau^2 L^2_{\text{emb}}
  \mathbb{E} \left[ \sum^{T - 1}_{t = 0} \left\| f' \left(\mathbf{w}_t\right) \right\|^2 \right]
  .
\end{align*}
By choosing a sufficiently small learning rate $\gamma$ satisfying $1/2 \geqslant \gamma \tau L_{\text{emb}}$, it follows that
\begin{align*}
  \sum_{t = 0}^{T - 1} \omega_t \leqslant & 2 \gamma^2 L^2_{\text{emb}} \tau
  \sigma^2 T + 4 \gamma^2 \tau^2 L^2_{\text{emb}} \mathbb{E} \left[ \sum^{T - 1}_{t
  = 0} \left\| f' \left(\mathbf{w}_t\right)\right\|^2 \right] . \numberthis \label{eq:organic-snapper}
\end{align*}

Now we get back to \eqref{eq:fancy-guinea}. Summing \eqref{eq:fancy-guinea} from $t=0$ to $t=T-1$ we obtain
\begin{align*}
  & \mathbb{E} \left[f \left(\embWeightIter{t}, \nnWeightIter{t}\right) - f \left(\embWeightIter{0}, \nnWeightIter{0}\right)\right]\\
  \leqslant & - \frac{\gamma}{2} \sum_{t = 0}^{T - 1} \mathbb{E} \left[ \left\| f'
  \left(\embWeightIter{t}, \nnWeightIter{t}\right) \right\|^2 \right]+ \frac{L \gamma^2}{2} \sigma^2 T + \frac{\gamma}{2}\sum_{t=0}^{T-1} \omega_t \\
  \overset{\eqref{eq:organic-snapper}}{\leqslant} & - \frac{\gamma}{2} \sum_{t = 0}^{T - 1} \mathbb{E} \left[ \left\| f'\left(\embWeightIter{t}, \nnWeightIter{t}\right) \right\|^2\right] + \frac{L \gamma^2}{2} \sigma^2 T + \gamma^3 L^2_{\text{emb}} \tau
  \sigma^2 T + 2 \gamma^3 \tau^2 L^2_{\text{emb}} \mathbb{E} \left[ \sum^{T - 1}_{t
  = 0} \left\| f' \left(\embWeightIter{t}, \nnWeightIter{t}\right) \right\|^2 \right] \\
  \leqslant & - \frac{\gamma}{4} \sum_{t = 0}^{T - 1} \mathbb{E} \left[ \left\| f'\left(\embWeightIter{t}, \nnWeightIter{t}\right) \right\|^2\right] + \frac{L \gamma^2}{2} \sigma^2 T + \gamma^3 L^2_{\text{emb}} \tau
  \sigma^2 T, 
\end{align*}
where the last step comes from choosing a sufficiently small learning rate $\gamma$ satisfying $1/4 \geqslant \gamma \tau L_{\text{emb}}$.
Rearranging the terms we obtain:
\begin{align*}
  & \frac{\sum_{t = 0}^{T - 1} \mathbb{E} \left[ \left\| f' \left(\embWeightIter{t}, \nnWeightIter{t}\right) \right\|^2\right]}{T}\\
  \leqslant & \frac{4\mathbb{E} \left[f \left(\embWeightIter{0}, \nnWeightIter{0}\right) - f \left(\embWeightIter{t},\nnWeightIter{t}\right)\right]}{\gamma T} + 2 L \gamma \sigma^2 + 4 \gamma^2 L^2_{\text{emb}}\tau \sigma^2 \\
    \leqslant & \frac{4\mathbb{E} \left[f \left(\embWeightIter{0}, \nnWeightIter{0}\right) - f \left(\embWeightIter{t}, \nnWeightIter{t}\right)\right]}{\gamma T} + 2 L \gamma \sigma^2 + \gamma L_{\text{emb}} \sigma^2 \\
      \leqslant & \frac{4\mathbb{E} \left[f \left(\embWeightIter{0}, \nnWeightIter{0}\right) - f^*\right]}{\gamma T} + 3L \gamma \sigma^2,
\end{align*}
where the last step comes from the definition of $L_{\text{emb}}$ and the second last step still comes from the our choosing a sufficiently small learning rate $\gamma$ satisfying $1/4 \geqslant \gamma \tau L_{\text{emb}}$.

By choosing the learning rate $\gamma$ to be the same as the optimal learning rate for SGD $\frac{1}{L + \sqrt{TL} \sigma + 4\tau L_{\text{emb}}}$, we obtain
\begin{align*}
  & \frac{\sum_{t = 0}^{T - 1} \mathbb{E} \left[ \left\| f' \left(\embWeightIter{t}, \nnWeightIter{t}\right) \right\|^2\right]}{T}\\
  \lesssim & \frac{ \sqrt{L} \sigma}{\sqrt{T}} + \frac{L}{T} + \frac{\tau L_{\text{emb}}}{T} \\
  \leqslant & \frac{ \sqrt{L} \sigma}{\sqrt{T}} + \frac{L}{T} + \frac{\tau \min\left\{1, \alpha\right\}}{T}
  \nonumber \\
\end{align*}
completing the proof.
\end{proof}
%
%

\end{document}